\documentclass[journal]{IEEEtran}

\usepackage{amsmath,amssymb,amsthm,graphicx}
\usepackage{url,algorithm,algorithmic}

\newcommand{\inner}[2]{\langle #1, #2 \rangle}
\newcommand{\g}[1]{\boldsymbol{#1}}

\newcommand{\R}[0]{\mathbb{R}} 
\newcommand{\E}[0]{\mathbb{E}} 
\newcommand{\N}[0]{\mathbb{N}} 
 
\newcommand{\I}[1]{\boldsymbol{1}_{#1}}
\renewcommand{\H}[0]{\mathcal{H}} 
\newcommand{\X}[0]{\mathcal{X}} 
\newcommand{\Y}[0]{\mathcal{Y}} 
\newcommand{\Z}[0]{\mathcal{Z}} 

\newcommand{\F}[0]{\mathcal{F}} 
\newcommand{\G}[0]{\mathcal{G}} 
\renewcommand{\L}[0]{\mathcal{L}}

\newtheorem{theorem}{Theorem}
\newtheorem{lemma}{Lemma}

\newtheorem{definition}{Definition}
\newtheorem{proposition}{Proposition}

\newcommand{\argmax}{\operatornamewithlimits{argmax}}

\hyphenation{op-tical net-works semi-conduc-tor}

\begin{document}
%
\title{Error Bounds for Piecewise Smooth\\ and Switching Regression}

\author{Fabien~Lauer
\thanks{F. Lauer is with the LORIA, University of Lorraine, CNRS, Nancy, France
e-mail: (see https://members.loria.fr/FLauer/).}
}


\maketitle

\begin{abstract}
The paper deals with regression problems, in which the nonsmooth target is assumed to switch between different operating modes. Specifically, piecewise smooth (PWS) regression considers target functions switching  deterministically via a partition of the input space, while switching regression considers arbitrary switching laws. 
The paper derives generalization error bounds in these two settings by following the approach based on Rademacher complexities. For PWS regression, our derivation involves a chaining argument and a decomposition of the covering numbers of PWS classes in terms of the ones of their component functions and the capacity of the classifier partitioning the input space. This yields error bounds with a radical dependency on the number of modes. For switching regression, the decomposition can be performed directly at the level of the Rademacher complexities, which yields bounds with a linear dependency on the number of modes. By using once more chaining and a decomposition at the level of covering numbers, we show how to recover a radical dependency. Examples of applications are given in particular for PWS and swichting regression with linear and kernel-based component functions. 
\end{abstract}

\begin{IEEEkeywords}
Learning theory, guaranteed risk, regression, Rademacher complexity, covering number, chaining.
\end{IEEEkeywords}

%
\IEEEpeerreviewmaketitle

\section{Introduction}

The paper deals with regression problems, in which the nonsmooth target is assumed to switch between different operating modes. Specifically, we focus on two different (but related) settings: piecewise smooth (PWS) regression and switching regression. In PWS regression, the target function is assumed to switch between modes deterministically via a partition of the input space, while in switching regression the switchings can be arbitrary. 

Switching regression was introduced by \cite{Quandt58} and early algorithms include the one of \cite{Spath79} and the expectation-maximization methods of \cite{Hosmer74
,Desarbo88,Gaffney99}. 
%
Regression trees \cite{Breiman84}, and subsequent improvements \cite{Friedman91,Rao99}, are well-known early examples of piecewise regression models, together with the mixtures of experts \cite{Jacobs91}, which however usually consider smooth switchings. 
More recently, most of the work in this field was produced by the control community for hybrid dynamical system identification \cite{Paoletti07,LauerBook} and with a focus on optimization issues \cite{Vidal03,Ferraritrecate03,Roll04,Bemporad05,Juloski05b,Bako11,Lauer11a,Le11,Lauer13a,Pham14,Le14,Lauer18} and algorithmic complexity \cite{Lauer15,Lauer16}. 
This produced a number of practical methods for minimizing the empirical error of switching models with various optimization accuracy and computational efficiency. However, few results are available in terms of statistical guarantees for the obtained models, and most of them are established in a parametric estimation framework \cite{Jordan95,Bako11,Chen14} or under restrictive conditions on the target function \cite{Zeevi98}. 

Here, we aim at obtaining generalization error bounds for switching models in the agnostic learning framework \cite{KeaSchSel94}.  The tools we will use are those of statistical learning and we follow a standard approach to derive error bounds based on Rademacher complexities \cite{Koltchinskii02,Bartlett02,Mohri12}. While doing so, we particularly pay attention to the dependency of the obtained bound on the number of modes. In this respect, our work is related to recent discussions on the dependency of error bounds for margin multi-category classifiers on the number of categories, see, e.g., \cite{Kuznetsov14,Guermeur16,Musayeva17}. As in these works, a crucial role will be played by the decomposition of a global capacity measure as a function of capacities of component function classes. 

Specifically, we bound the Rademacher complexity of PWS classes using the chaining method \cite{Talagrand14} and covering numbers. We propose a decomposition scheme to express the covering numbers of a PWS class in terms of those of its component function classes and the capacity of the classifier defining the partition of the input space. For a large set of PWS classes, this results in error bounds with a radical dependency on the number of modes and efficient convergence rates when compared to the results of \cite{Guermeur16,Musayeva17} in multi-category classification.    

For switching regression, we follow a similar path but also consider a more straightforward approach, in which we apply the decomposition at the level of the Rademacher complexities themselves, without invoking covering numbers. A comparison of the two approaches shows that decomposing at the level of covering numbers is more advantageous with respect to the number of modes, with however a slightly worse rate of convergence with respect to the sample size for kernel-based classes. 

\paragraph*{Paper organization}
Section~\ref{sec:framework} formally exposes the two considered settings: PWS regression in Sect.~\ref{sec:pwsregression} and switching regression in~\ref{sec:switchingreg}. Then, Section~\ref{sec:errorbounds} derives error bounds for the PWS case in subsection~\ref{sec:errorpws} and for switching regression in subsection~\ref{sec:errorswitching}. Section~\ref{sec:conclusion} concludes the paper. Throughout the paper, a number of technical results are retained in Appendix, more precisely, in App.~\ref{sec:tools} for those from the literature and in App.~\ref{sec:proofreal}--\ref{sec:addresults} for newly derived ones. 

\paragraph*{Notation}
For an integer $n$, $[n]$ denotes the set of integers from $1$ to $n$. 
A bold lowercase letter with a subscript, $\g t_n$, denotes a sequence, $(t_i)_{1\leq i\leq n}$. Given two sets $\X$ and $\Y$, the set of functions from $\X$ into $\Y$ is written as $\Y^{\X}$.

\section{Theoretical framework}
\label{sec:framework}

Let $\X$ denote the input space and let the output space be $\Y =[-M,M]$ for some $M>0$. 
We assume that the relationship between inputs and outputs is characterized by the probability distribution of the random pair  $(X,Y) \in \X\times \Y$ and further assume that this distribution is unknown. 
Given a realization of the sample $\left((X_i,Y_i)\right)_{1\leq i\leq n}$ of $n$ independent copies of $(X,Y)$, the aim of regression is to learn the model $f$ that minimizes, over a certain function class to be defined below, the (expected) risk. In this paper, we define the risk from loss functions that can be clipped at $M$.

\begin{definition}[Clipping]
\label{def:clipping}
For any $M>0$ and $t\in\R$, we define the clipped version $\bar{t}$ of $t$ as
$$
	\bar{t} = \begin{cases}
		-M,& \mbox{if } t < -M \\
		t, & \mbox{if } t \in [-M,M] \\
		M, & \mbox{if } t > M.
		\end{cases}
$$
Similarly, the clipped version $\bar{f}$ of a function $f : \X\rightarrow \R$ is defined as
$$
	\forall x\in\X,\quad \bar{f}(x) = \overline{f(x)} = \begin{cases}
		-M,& \mbox{if } f(x) < -M \\
		f(x), & \mbox{if } f(x) \in [-M,M] \\
		M, & \mbox{if } f(x) > M
		\end{cases}
$$
and $\bar{\F}$ denotes the clipped function class $\{\bar{f} : f \in \F\}$.
\end{definition}

Recall from \cite{Steinwart08} that a loss function $\ell : \Y\times \R \rightarrow \R^+$  can be clipped at $M$ when, for all $(y,t)\in\Y\times \R$, 
$$
	\ell(y,\bar{t}) \leq \ell (y,t).
$$

\subsection{PWS regression}
\label{sec:pwsregression}

For PWS regression, we consider $\ell_p$-losses defined for $p\in[1,\infty)$ 
by $\ell_p(y,t) = |y - t|^p$ 
and the corresponding $\ell_p$-risks. 
\begin{definition}[$\ell_p$-risk and empirical $\ell_p$-risk]
\label{def:lploss}
For $p\in[1,\infty)$, the $\ell_p$-risk of a function $f$ from $\X$ into $\R$ is  
$$
	L_{p}(f) = \E_{X,Y} |Y - f(x)|^p
$$
and the corresponding empirical $\ell_p$-risk evaluated on an $n$-sample $(X_i, Y_i)_{1\leq i\leq n}$ is
$$
	\hat{L}_{p,n}(f) = \frac{1}{n} \sum_{i=1}^n  |Y_i - f(X_i)|^p.
$$
\end{definition}
Since the $\ell_p$-losses can be clipped at $M$, the $\ell_p$-risk of a clipped function, $L_{p}(\bar{f})$, is always smaller than the one of the unclipped $f$. The following thus considers that the final result of the learning procedure estimating $f$ is $\bar{f}$ and derives bounds on the risk of $\bar{f}$. 

We consider the agnostic learning framework and thus aim at uniform bounds on the $\ell_p$-risk holding (with high probability) for all $f$ in some predefined function class $\F$. In particular, we will focus on classes of piecewise smooth functions:
\begin{definition}[PWS class]
\label{def:pws}
Given a sequence $(\F_k)_{1\leq k\leq C}$ of classes of functions from $\X$ into $\R$ and a set of classifiers $\G$ from $\X$ into $[C]$, we define the PWS class of functions 
$$
	\F_{\G} = \left\{f \in \R^{\X} : f(x) = f_{g(x)}(x), \ g\in \G,\ f_k \in \F_k \right\}.
$$
\end{definition}

\subsection{Switching regression}
\label{sec:switchingreg}

Switching regression differs from PWS regression in the assumptions made regarding the switchings in the data generating process. While PWS regression assumes that the switchings are a deterministic function of $x$,\footnote{Note that it is not required for PWS regression to assume that the data are generated by a switching process: it can be considered merely as the use of a particular model class in a standard nonlinear regression setting.  However, it is mostly useful under such an assumption, in which case traditional regression methods based on smooth function classes may yield a larger error.} switching regression deals with arbitrary switchings.

In order to allow for such arbitrary switchings, we define classes of switching functions with vector-valued functions and embed the selection of the component function used to predict $Y$ in the definition of the loss functions and risks.
\begin{definition}[Switching class]
\label{def:switching}
Given a sequence $(\F_k)_{1\leq k\leq C}$ of classes of functions from $\X$ into $\R$, we define the switching class of vector-valued functions from $\X$ into $\R^C$ as 
$$
	\F^S = \left\{ f : f(x) = \left(f_k(x)\right)_{1\leq k\leq C}, \ f_k \in \F_k,\ 1\leq k\leq C\right\}.
$$
\end{definition}
\begin{definition}[$\ell_p$-switching risks]
\label{def:lplossswitching}
For $p\in[1,\infty)$, the switching $\ell_p$-risk of a vector-valued function $f$ from $\X$ into $\R^C$ is  
$$
	L_{p}^S(f) = \E_{X,Y}  \min_{k\in [C]} |Y - f_k(X)|^p
$$
and the corresponding switching empirical $\ell_p$-risk evaluated on an $n$-sample $(X_i, Y_i)_{1\leq i\leq n}$ is
$$
	\hat{L}_{p,n}^S(f) = \frac{1}{n} \sum_{i=1}^n \min_{k\in [C]} |Y_i - f_k(X_i)|^p .
$$
\end{definition}
Again, clipped versions of $f$, i.e., $\bar{f}= (\bar{f}_k)_{1\leq k\leq C}$,  will be used and it is easy to verify that the switching $\ell_p$-losses are clippable in the sense that
$$
	\forall (y,t) \in \Y\times \R^C,\quad  \min_{k\in [C]} |y - \bar{t}_k|^p \leq  \min_{k\in [C]} |y - t_k|^p .
$$

Such switching loss functions formalize the goal of accurately learning a collection of submodels so that, for all inputs, at least one submodel can predict the output well. Such a setting appears for instance in hybrid dynamical system identification \cite{LauerBook} and a number of computer vision applications \cite{Vidal07}.


\section{Error bounds}
\label{sec:errorbounds}

The main strategy for learning either PWS or switching models is empirical risk minimization, i.e., the minimization of the empirical risks $\hat{L}_{p,n}(f)$ or $\hat{L}_{p,n}^S(f)$ given a realization, $\left((x_i,y_i)\right)_{1\leq i\leq n}$, of the training sample. 
The following is dedicated to establishing upper bounds on the expected risks in terms of these empirical risks and a confidence (semi-)interval or control term depending on the function classes over which the minimization takes place.

We first introduce a general error bound based on the Rademacher complexity of the function class of interest. 
\begin{definition}[Rademacher complexity] 
\label{def:Rademacher-complexity}
Let $T$ be a random variable with values in $\mathcal{T}$. 
For $n \in \mathbb{N}^*$,
let $\g{T}_n = \left( T_i  \right)_{1 \leq i \leq n}$
be an $n$-sample of independent copies of $T$, let
$\boldsymbol{\sigma}_n = \left ( \sigma_i \right )_{1 \leq i \leq n}$
be a sequence of independent random variables uniformly distributed in $\{-1,+1\}$. 
Let $\mathcal{F}$ be a class of real-valued functions with domain $\mathcal{T}$.
The {\em empirical Rademacher complexity} of $\mathcal{F}$ given $\g{T}_n$ is
$$
\hat{\mathcal{R}}_n \left ( \mathcal{F} \right ) = 
\mathbb{E}_{\boldsymbol{\sigma}_n}
\left [ \left. \sup_{f \in \mathcal{F}} \frac{1}{n}
\sum_{i=1}^n \sigma_i f \left ( T_i \right )
\right| \g{T}_n \right ].
$$
The {\em Rademacher complexity} of $\mathcal{F}$ is
$$
\mathcal{R}_n \left ( \mathcal{F} \right ) 
= \mathbb{E}_{\mathbf{T}_n} 
\left [ \hat{\mathcal{R}}_n \left ( \mathcal{F} \right ) \right ]
= \mathbb{E}_{\mathbf{T}_n \boldsymbol{\sigma}_n} \left [
\sup_{f \in \mathcal{F}} \frac{1}{n}
\sum_{i=1}^n \sigma_i f \left ( T_i \right ) \right ].
$$
\end{definition}

\begin{theorem}[After, e.g., Theorem~3.1 in \cite{Mohri12}]
\label{thm:general}
Let $\mathcal{L}$ be a class of functions from $\mathcal{Z}$ into $[0,1]$ and $(Z_i)_{1\leq i\leq n}$ be a sequence of independent copies of the random variable $Z\in\mathcal{Z}$.  Then, for a fixed $\delta \in (0, 1)$,
with probability at least $1 - \delta$, uniformly over all $\ell \in \mathcal{L}$,
$$
\E_Z  \ell(Z) \leq \frac{1}{n}\sum_{i=1}^n \ell(Z_i) + 2 \mathcal{R}_n \left ( \mathcal{L} \right )
+ \sqrt{\frac{\log \frac{1}{\delta} }{2n}} .
$$
\end{theorem}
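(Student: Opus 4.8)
The plan is to follow the classical two-step recipe for Rademacher-based uniform bounds: first concentrate the worst-case deviation around its mean via a bounded-differences argument, then control that mean by symmetrization. Define the random variable
$$
	\Phi(\g{Z}_n) = \sup_{\ell\in\L}\left( \E_Z \ell(Z) - \frac{1}{n}\sum_{i=1}^n \ell(Z_i)\right),
$$
which measures the largest one-sided gap between true and empirical risk over the class. The whole point is that controlling $\Phi$ from above simultaneously controls $\E_Z \ell(Z) - \frac{1}{n}\sum_{i=1}^n \ell(Z_i)$ for every individual $\ell\in\L$, which is exactly the uniform statement we want.

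First I would establish concentration of $\Phi$. Because every $\ell$ takes values in $[0,1]$, replacing a single sample point $Z_i$ by an arbitrary $Z_i'$ changes the empirical average $\frac{1}{n}\sum_{j}\ell(Z_j)$ by at most $1/n$ for each fixed $\ell$, and since the supremum of functions that each vary by at most $1/n$ itself varies by at most $1/n$, the function $\g{Z}_n \mapsto \Phi(\g{Z}_n)$ satisfies the bounded-differences condition with constants $c_i = 1/n$. Applying McDiarmid's inequality then yields, for a fixed $\delta\in(0,1)$ and with probability at least $1-\delta$,
$$
	\Phi(\g{Z}_n) \leq \E_{\g{Z}_n}\left[\Phi(\g{Z}_n)\right] + \sqrt{\frac{\log\frac{1}{\delta}}{2n}} .
$$

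Second I would bound the expectation $\E_{\g{Z}_n}[\Phi(\g{Z}_n)]$ by $2\mathcal{R}_n(\L)$ through symmetrization. I introduce an independent ghost sample $\g{Z}_n' = (Z_i')_{1\leq i\leq n}$ of i.i.d.\ copies of $Z$, rewrite $\E_Z\ell(Z) = \E_{\g{Z}_n'}[\frac{1}{n}\sum_i \ell(Z_i')]$, and use convexity of the supremum together with Jensen's inequality to pull the ghost-sample expectation outside the sup. This turns $\E[\Phi]$ into an expectation of $\sup_{\ell}\frac{1}{n}\sum_i(\ell(Z_i')-\ell(Z_i))$. Since each summand $\ell(Z_i')-\ell(Z_i)$ is symmetric about zero, multiplying it by an independent Rademacher sign $\sigma_i$ leaves the joint distribution unchanged, so inserting the $\sigma_i$ is legitimate; splitting the resulting supremum of a difference into two suprema (subadditivity) and observing that both pieces have the same distribution gives the factor of $2$ and the Rademacher complexity $\mathcal{R}_n(\L)$. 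Combining this with the concentration inequality, and using that $\E_Z\ell(Z)-\frac{1}{n}\sum_i\ell(Z_i)\leq\Phi(\g{Z}_n)$ holds for every $\ell\in\L$, establishes the theorem.

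The step I expect to require the most care is the symmetrization, specifically the justification for inserting the Rademacher variables (which rests on the symmetry of $\ell(Z_i')-\ell(Z_i)$ and independence of the ghost sample) and the subsequent use of the subadditivity of the supremum to decouple the two samples; the bounded-differences verification and the final assembly are then routine.
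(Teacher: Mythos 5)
Your proposal is correct and is precisely the standard argument behind the cited result (Theorem~3.1 in \cite{Mohri12}), which the paper quotes without reproving: McDiarmid's inequality applied to the supremum of the one-sided deviation with bounded differences $1/n$, followed by ghost-sample symmetrization to bound its expectation by $2\mathcal{R}_n(\mathcal{L})$. Nothing is missing; the symmetrization step you flag as delicate is handled exactly as you describe.
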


In the remaining of the paper, we assume without loss of generality that $Y\in[-M,M]$ with $M=\frac{1}{2}$, since otherwise we can recover this setting by rescaling $Y$. This choice is made in order to guarantee that the $\ell_p$-losses remain bounded by $1$ and that the corresponding function classes satisfy the assumptions of Theorem~\ref{thm:general}.

\subsection{Error bounds for PWS classes}
\label{sec:errorpws}

Our derivation of error bounds for PWS classes starts with the following consequence of Theorem~\ref{thm:general}, whose proof is given in Appendix~\ref{sec:proofreal}.
\begin{theorem}
\label{thm:real}
Let $\mathcal{F}$ be a real-valued function class. Then, for any $\delta \in (0, 1)$, 
with probability at least $1 - \delta$, the $\ell_p$-risk of Definition~\ref{def:lploss} is bounded uniformly $\forall \bar{f}\in \bar{\F}$ as
$$
	L_p(\bar{f}) \leq \hat{L}_{p,n}(\bar{f}) + 2p  \mathcal{R}_n \left ( \bar{\F}\right )
+ \sqrt{\frac{\log  \frac{1}{\delta}  }{2n}} . 
$$
\end{theorem}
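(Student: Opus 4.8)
The plan is to apply the general bound of Theorem~\ref{thm:general} to the loss class obtained by composing the $\ell_p$-loss with the clipped functions, and then to peel off the loss by a contraction argument in order to pass from the Rademacher complexity of that loss class to the one of $\bar{\F}$ itself. Accordingly, I would set $\Z=\X\times\Y$ and introduce the loss class
$$
	\L = \left\{ (x,y)\mapsto |y-\bar f(x)|^p : f\in\F \right\}.
$$
To invoke Theorem~\ref{thm:general}, its elements must take values in $[0,1]$. Since $Y\in[-\tfrac12,\tfrac12]$ under the normalization $M=\tfrac12$, and since every $\bar f$ likewise takes values in $[-\tfrac12,\tfrac12]$ by Definition~\ref{def:clipping}, we have $|y-\bar f(x)|\leq 1$ and hence $|y-\bar f(x)|^p\leq 1$. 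Thus $\L$ maps into $[0,1]$, and Theorem~\ref{thm:general} applies with $Z=(X,Y)$, giving, with probability at least $1-\delta$ and uniformly over $\bar f\in\bar\F$,
$$
	L_p(\bar f)\leq \hat L_{p,n}(\bar f)+2\,\mathcal{R}_n(\L)+\sqrt{\frac{\log\frac1\delta}{2n}}.
$$

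The remaining step, and the one requiring the most care, is to establish $\mathcal{R}_n(\L)\leq p\,\mathcal{R}_n(\bar\F)$. For each fixed $y\in[-\tfrac12,\tfrac12]$, consider $\phi_y(t)=|y-t|^p$ restricted to $t\in[-\tfrac12,\tfrac12]$: on this interval $|y-t|\leq 1$, so for $p\geq1$ the derivative has magnitude $p\,|y-t|^{p-1}\leq p$, showing that $\phi_y$ is $p$-Lipschitz there. Two subtleties must be handled. First, $\phi_y$ is not globally Lipschitz for $p>1$; however, the composition only ever evaluates $\phi_{y_i}$ at the points $\bar f(x_i)\in[-\tfrac12,\tfrac12]$, so one may replace $\phi_{y_i}$ by any globally $p$-Lipschitz extension agreeing with it on $[-\tfrac12,\tfrac12]$ without altering the relevant values. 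Second, the map applied at sample point $i$ is $\phi_{y_i}$, which varies with $i$; I would therefore invoke the index-dependent version of Talagrand's contraction lemma (from App.~\ref{sec:tools}), which allows a distinct $L$-Lipschitz function per coordinate.

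This yields, for every realization of the sample,
$$
	\hat{\mathcal{R}}_n(\L)=\E_{\g{\sigma}_n}\sup_{f\in\F}\frac1n\sum_{i=1}^n \sigma_i\,\phi_{y_i}(\bar f(x_i))\leq p\,\E_{\g{\sigma}_n}\sup_{f\in\F}\frac1n\sum_{i=1}^n\sigma_i\,\bar f(x_i)=p\,\hat{\mathcal{R}}_n(\bar\F).
$$
Taking the expectation over the sample preserves the inequality, so $\mathcal{R}_n(\L)\leq p\,\mathcal{R}_n(\bar\F)$, and substituting into the bound above gives the claimed inequality. It is worth emphasizing that clipping is used twice: once to keep the loss in $[0,1]$ so that Theorem~\ref{thm:general} applies, and once to secure the uniform Lipschitz constant $p$ on a bounded domain; without it neither property would hold for general unbounded $f$.
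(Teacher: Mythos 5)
Your proposal is correct and follows the same overall skeleton as the paper's proof in Appendix~\ref{sec:proofreal} (apply Theorem~\ref{thm:general} to the clipped loss class taking values in $[0,1]$, then remove the loss by a contraction argument with constant $p$), but the contraction step itself is executed by a different device. The paper keeps a single, index-independent Lipschitz map: it introduces the error class $\mathcal{E}=\{(x,y)\mapsto y-\bar f(x)\}$ with range $[-2M,2M]$, writes the loss class as $\phi_p\circ|\cdot|\circ\mathcal{E}$ with $\phi_p(u)=u^p$ of Lipschitz constant $p(2M)^{p-1}=p$ for $M=\tfrac12$, applies Lemma~\ref{lem:contraction} once, and then disposes of the $y$-dependence through the decomposition $\mathcal{R}_n(\mathcal{E})\leq \E\bigl[\tfrac1n\sum_{i}\sigma_iY_i\bigr]+\E\bigl[\sup_{\bar f}\tfrac1n\sum_i(-\sigma_i)\bar f(X_i)\bigr]$, using $\E[\sigma_iY_i]=0$ and the symmetry of the distribution of $\sigma_i$ to identify the second term with $\mathcal{R}_n(\bar{\mathcal{F}})$. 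You instead condition on the $y_i$ and contract coordinatewise with the maps $\phi_{y_i}(t)=|y_i-t|^p$, each $p$-Lipschitz on $[-\tfrac12,\tfrac12]$; this is sound and arguably more direct, and you correctly flag the two subtleties (the need for a globally Lipschitz extension, and the dependence of the map on the index $i$). The one caveat is your attribution: the contraction lemma actually recorded in Appendix~\ref{sec:tools} (Lemma~\ref{lem:contraction}) is the single-function version, so the index-dependent variant you invoke is not available off the shelf in this paper --- it is a standard strengthening of Theorem~4.12 in Ledoux--Talagrand, but you would need to state it explicitly. Apart from that citation mismatch, both arguments deliver $\mathcal{R}_n(\mathcal{L}_{p,\mathcal{F}})\leq p\,\mathcal{R}_n(\bar{\mathcal{F}})$ and hence the claimed bound; what the paper's route buys is that it only ever needs the contraction lemma in the exact form it states, at the price of the extra error-class bookkeeping.
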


Then, it remains to bound the Rademacher complexity of the clipped PWS class $\bar{\F}_{\G}$ which can be expressed from the clipped $\bar{\F}_k$'s as in Definition~\ref{def:pws}.

For this purpose, we will apply the chaining method \cite{Talagrand14} and introduce other capacity measures: the covering numbers.  

\begin{definition}[Pseudo-metric]
\label{def:pseudometric}
Given a sequence $\g t_n \in \mathcal{T}^n$, $d_{q,\g t_n}$ is the empirical pseudo-metric defined $\forall (f,f^\prime)\in\left(\R^{\mathcal{T}}\right)^2$ and $q\in [1,\infty)$ by
$$
	d_{q,\g t_n}(f,f^\prime) = \left( \frac{1}{n}\sum_{i=1}^n |f(t_i) - f^\prime(t_i)|^q \right)^{\frac{1}{q}}
$$
and for $q=\infty$ by
$$
	d_{\infty,\g t_n}(f,f^\prime) = \max_{i\in[n]} |f(t_i) - f^\prime(t_i)|.
$$
\end{definition}

\begin{definition}[Covering numbers] 
Given a function class $\F\subset\R^{\mathcal{T}}$ and a (pseudo-)metric $\rho$ over $\R^{\mathcal{T}}$, the {\em covering number} $\mathcal{N}(\epsilon, \F, \rho)$ at scale $\epsilon$ of $\F$ for the distance $\rho$ is the smallest cardinality of the proper $\epsilon$-net $\H\subseteq\F$ of $\F$ such that $\forall f\in\F$, $\rho(f,\H) < \epsilon$.
{\em Uniform covering numbers} are defined for all pseudo-metrics as in Definition~\ref{def:pseudometric} by
$$
	\mathcal{N}_q(\epsilon, \F, n) = \sup_{\g t_n\in \mathcal{T}^n} \mathcal{N}(\epsilon, \F, d_{q,\g t_n} ).
$$
\end{definition}

By considering covering numbers at different scales, chaining allows one to bound the Rademacher complexity of $\bar{\F}_{\G}$ whose diameter is $2M=1$ as follows (see Theorem~\ref{thm:chaining} in Appendix~\ref{sec:tools}): for any $N\in\N^*$,
\begin{equation}\label{eq:chaining}
	\hat{\mathcal{R}}_n(\bar{\F}_{\G}) \leq 2^{-N} + 6 \sum_{j=1}^N 2^{-j} \sqrt{\frac{\log\mathcal{N}(2^{-j} ,\bar{\F}_{\G}, d_{2,\g x_n})}{n}}. 
\end{equation}
The task is now to bound the covering numbers of the function class $\bar{\F}_{\G}$. This is done below by decomposing them in terms of the ones of the component function classes $\bar{\F}_k$ on the one hand and of the capacity of the classifier $\G$ on the other hand. In particular, we will measure the capacity of $\G$ with the growth function. 
\begin{definition}[Trace and growth function]
\label{def:growthfunction}
Let $\G$ be a set of classifiers from $\X$ to $[C]$. The {\em trace} of $\G$ on a set $\g x_n\in\X^n$ is the set
$$
	\G_{\g x_n} = \left\{ \left(g(x_1),\dots, g(x_n) \right) : g\in\G\right\} \subseteq [C]^n
$$
and the {\em growth function} of $\G$ is defined by
$$
\forall n\in \mathbb{N},\quad 	\Pi_{\G}(n) = \sup_{\g x_n \in \X^n} |\G_{\g x_n}| .
$$
\end{definition}

\subsubsection{Decomposition of the covering numbers}

The following gives two results based on two different techniques to optimize the dependency of the decomposition on the number of component functions (or modes), $C$. 
Note that these results are stated in terms of the clipped classes, but can be proved similarly for the unclipped ones. 
 
\begin{lemma}
\label{lem:coveringpwsLp}
Given a PWS class $\F_{\G}$ as in Definition~\ref{def:pws}, we have
$$
	\mathcal{N}(\epsilon, \bar{\F}_{\G}, d_{q,\g x_n}) \leq \Pi_{\G}(n)\prod_{k=1}^C \mathcal{N}\left(\frac{\epsilon}{C^{1/q}}, \bar{\F}_k, d_{q,\g x_n}\right) .	
$$
\end{lemma}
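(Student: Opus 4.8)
The plan is to construct an explicit $\epsilon$-net of $\bar{\F}_{\G}$ for the pseudo-metric $d_{q,\g x_n}$ whose cardinality matches the claimed product, exploiting the fact that this pseudo-metric only sees the values of functions on the fixed sample $\g x_n$. The starting observation is that a clipped PWS function can be written as $\bar{f}(x) = \bar{f}_{g(x)}(x)$ with $g\in\G$ and $\bar{f}_k\in\bar{\F}_k$, since clipping acts pointwise and therefore commutes with the mode selection. Thus every element of $\bar{\F}_{\G}$ is specified by a selector $g$ together with a tuple $(\bar{f}_1,\dots,\bar{f}_C)$ of clipped component functions.

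First I would fix, for each mode $k\in[C]$, a minimal proper $\epsilon/C^{1/q}$-net $\H_k\subseteq\bar{\F}_k$ for $d_{q,\g x_n}$, of cardinality $\mathcal{N}(\epsilon/C^{1/q},\bar{\F}_k,d_{q,\g x_n})$. Next I would note that, on the sample, the selector $g$ acts only through its trace $(g(x_1),\dots,g(x_n))\in\G_{\g x_n}$, and that the number of distinct such traces is at most $\Pi_{\G}(n)$ by Definition~\ref{def:growthfunction}. To each pair consisting of a representative $g\in\G$ of a trace and a tuple $(\hat{f}_1,\dots,\hat{f}_C)\in\H_1\times\cdots\times\H_C$, I associate the candidate $\hat{f}(x)=\hat{f}_{g(x)}(x)$, which lies in $\bar{\F}_{\G}$. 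The number of candidates is $\Pi_{\G}(n)\prod_{k=1}^C|\H_k|$, which is exactly the right-hand side of the claim, and each candidate belongs to $\bar{\F}_{\G}$, so the net is proper.

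The core estimate is to verify that this family is an $\epsilon$-net. Given an arbitrary $\bar{f}\in\bar{\F}_{\G}$ with $\bar{f}(x)=\bar{f}_{g(x)}(x)$, I pick the candidate built from a representative of the trace of $g$ and from approximants $\hat{f}_k\in\H_k$ satisfying $d_{q,\g x_n}(\bar{f}_k,\hat{f}_k)<\epsilon/C^{1/q}$. The key step is to partition the sample indices by active mode, $I_k=\{i\in[n] : g(x_i)=k\}$, and to write $n\,d_{q,\g x_n}(\bar{f},\hat{f})^q = \sum_{k=1}^C\sum_{i\in I_k}|\bar{f}_k(x_i)-\hat{f}_k(x_i)|^q$. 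Bounding each inner sum over the subset $I_k$ by the full sum over $[n]$, that is by $n\,d_{q,\g x_n}(\bar{f}_k,\hat{f}_k)^q<n\,\epsilon^q/C$, and summing over the $C$ modes gives $d_{q,\g x_n}(\bar{f},\hat{f})^q<\epsilon^q$. The normalization $\epsilon/C^{1/q}$ is precisely what makes the factor $C$ cancel at this point.

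The step I expect to require the most care is the passage from the inner sum over $I_k$ to the full sum over $[n]$: it is where information is deliberately discarded to keep the argument simple, and where the per-mode scale must be tracked exactly. Beyond this, the bookkeeping deserves attention on two fronts: confirming that each constructed $\hat{f}$ genuinely belongs to $\bar{\F}_{\G}$ (which holds since a representative $g$ is chosen in $\G$ and each $\hat{f}_k\in\bar{\F}_k$), and checking that the strict inequalities survive the summation so that the definition of covering number, with its strict requirement $d_{q,\g x_n}(\bar{f},\hat{f})<\epsilon$, is met.
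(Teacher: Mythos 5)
Your proposal is correct and follows essentially the same route as the paper: index the net by the traces $\g c\in\G_{\g x_n}$ (at most $\Pi_{\G}(n)$ of them) combined with tuples from the component nets, split the sum over sample indices by active mode, and bound each restricted sum $\sum_{i\in I_k}$ by the full sum over $[n]$; the only cosmetic difference is that you build the nets directly at scale $\epsilon/C^{1/q}$ whereas the paper builds them at scale $\epsilon$ and rescales at the end. The one omission is the case $q=\infty$, which the statement covers (with $C^{1/\infty}=1$) and which is the case actually invoked later in Proposition~\ref{prop:metricentropy1}: your summation argument does not apply verbatim there, but replacing sums by maxima gives $d_{\infty,\g x_n}(\bar f,\hat f)\leq\max_k\max_{i\in I_k}|\bar f_k(x_i)-\hat f_k(x_i)|\leq\max_k d_{\infty,\g x_n}(\bar f_k,\hat f_k)<\epsilon$ with no loss of a factor $C$, exactly as in the paper.
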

\begin{proof}
For each possible classification $\g c\in \G_{\g x_n}$ of $\g x_n$, let $g_{\g c}\in\G$ be a classifier from $\G$ producing this classification. Then, we build a set $H_{\g c}$ of functions $h\in\bar{\F}_{\G}$ such that $h(x_i) = h_{g_{\g c}(x_i)}(x_i) = h_{c_i}(x_i)$ with $(h_k)_{1\leq k\leq C}$ taken from the product of the smallest proper $\epsilon$-nets of the $\bar{\F}_k$'s. Since there are $\mathcal{N}(\epsilon, \bar{\F}_k, d_{q,\g x_n})$ functions $h_k$ in each one of these $\epsilon$-nets, we have
$$
	|H_{\g c}| \leq \prod_{k=1}^C \mathcal{N}(\epsilon, \bar{\F}_k, d_{q,\g x_n}) 
$$
and, since there are at most $\Pi_{\G}(n)$ classifications $\g c\in \G_{\g x_n}$, we can build a set $H = \bigcup_{\g c\in \G_{\g x_n}} H_{\g c} \subseteq\bar{\F}_{\G}$ with a cardinality bounded by
$$
	|H| \leq \sum_{\g c\in \G_{\g x_n}} |H_{\g c}| \leq \Pi_{\G}(n)\prod_{k=1}^C \mathcal{N}(\epsilon, \bar{\F}_k, d_{q,\g x_n}) .
$$

To conclude, we need to show that $H$ is a $(C^{1/q}\epsilon)$-net of $\bar{\F}_{\G}$ with respect to $d_{q,\g x_n}$. 
Given any $f\in\bar{\F}_{\G}$, there is some $\g c\in \G_{\g x_n}$ that coincides with the classification of $\g x_n$ by $g$ in $f(x)=f_{g(x)}(x)$, and thus for which, for $q<\infty$, for all functions $h\in H_{\g c} \subseteq H$, 
\begin{align*}
	d_{q,\g x_n}(f, h)^q &= \frac{1}{n} \sum_{i=1}^n |f(x_i) - h(x_i)|^q \\
	&= \frac{1}{n} \sum_{i=1}^n |f_{c_i}(x_i) - h_{c_i}(x_i)|^q\\
	&= \frac{1}{n} \sum_{k=1}^C \sum_{i : c_i = k}  |f_k(x_i) - h_k(x_i)|^q\\
	&\leq \sum_{k=1}^C  \frac{1}{n} \sum_{i=1}^n  |f_k(x_i) - h_k(x_i)|^q .
\end{align*}
By construction, among all the functions $h\in H_{\g c} \subseteq H$, there is at least one such that, for all $k\in[C]$, $h_k$ is the center of an $\epsilon$-ball containing $f_k\in \bar{\F}_k$, i.e., $\frac{1}{n} \sum_{i=1}^n |f_k(x_i) - h_k(x_i)|^q \leq \epsilon^q$. Thus, there is some $h\in H$ such that
\begin{align*}
	d_{q,\g x_n}(f, h)^q &\leq \sum_{k=1}^C \epsilon^q = C \epsilon^q .
\end{align*}
The statement for $q<\infty$ follows by rescaling $\epsilon$ by $1/C^{1/q}$. The case $q=\infty$ is proved similarly, but without the need for rescaling: 
\begin{align*}
	d_{\infty,\g x_n}(f, h) &= \max_{i\in[n]} |f(x_i) - h(x_i)| \\
	&= \max_{k\in[C]} \max_{i : c_i = k}  |f_k(x_i) - h_k(x_i)|\\
	&\leq \max_{k\in[C]} \max_{i \in[n]}  |f_k(x_i) - h_k(x_i)|\\
	&\leq \max_{k\in[C]} \epsilon = \epsilon.
\end{align*}
\end{proof}
Lemma~\ref{lem:coveringpwsLp} provides a bound on covering numbers in $L_q$-norm, which is most advantageous with respect to the dependency on $C$  for $q=\infty$. Covering numbers in $L_{\infty} $-norm can be used in chaining thanks to the following easy to verify inequality:
\begin{equation}\label{eq:coveringl2inf}
	\mathcal{N}(\epsilon, \bar{\F}_{\G}, d_{2,\g x_n}) \leq \mathcal{N}(\epsilon, \bar{\F}_{\G}, d_{\infty,\g x_n})  .
\end{equation}
However, this bound can be crude and not optimal in terms of the sample size $n$. Furthermore, Sauer-Shelah lemmas used to bound the covering numbers of the component classes $\bar{\F}_k$ can typically be made independent of $n$ for $q=2$ but not for $q=\infty$ (see Lemma~\ref{lem:sauerdimensionfree} below). Nonetheless, in some cases as emphasized in \cite{Musayeva17}, the relationship~\eqref{eq:coveringl2inf} can be sufficient to obtain a good dependency on both $C$ and $n$ in the final chained bound.

For comparison, the following lemma provides a bound with the same dependency on $C$ than the one in Lemma~\ref{lem:coveringpwsLp} for $q=\infty$ while relying solely on $L_2$-norm covering numbers. Its proof uses a slightly different technique based on the introduction of a collection of pseudo-metrics, all derived from the $L_2$-norm but based on different samples. The other ingredient is the non-increasing nature of uniform covering numbers of Glivenko-Cantelli (GC) classes \cite{Dudley91} with respect to $n$, proved in Appendix~\ref{sec:addresults}. Note that focusing on uniform GC classes is not very restrictive as this coincides with all learnable classes \cite{Alon97}.
\begin{lemma}\label{lem:coveringpws}
Given a PWS class $\F_{\G}$ as in Definition~\ref{def:pws} with uniform GC classes $\bar{\F}_k$, $1\leq k\leq C$, we have
$$
	\mathcal{N}(\epsilon, \bar{\F}_{\G}, d_{2,\g x_n}) 
	\leq \Pi_{\G}(n) \prod_{k=1}^C \mathcal{N}_2(\epsilon, \bar{\F}_k, n) .
$$
\end{lemma}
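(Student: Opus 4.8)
The plan is to refine the decomposition used in the proof of Lemma~\ref{lem:coveringpwsLp}: instead of covering every component class $\bar{\F}_k$ on the whole sample $\g x_n$, I would cover it only on the subsample of points assigned to mode $k$. The payoff is that the sizes of these subsamples act as weights that sum to one, which removes the factor $C$ responsible for the $\epsilon/C^{1/q}$ rescaling in Lemma~\ref{lem:coveringpwsLp}.

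Concretely, I would fix $\g x_n\in\X^n$, and for each classification $\g c\in\G_{\g x_n}$ pick a representative $g_{\g c}\in\G$ realizing it, setting $I_k=\{i\in[n]:c_i=k\}$, $n_k=|I_k|$, and $\g x_{I_k}=(x_i)_{i\in I_k}\in\X^{n_k}$. For a function $f\in\bar{\F}_{\G}$ whose classifier agrees with $\g c$ on $\g x_n$, and for any $h$ built from component functions $h_k$ and classified by $g_{\g c}$, both $f$ and $h$ evaluate through their $k$-th components at every $i\in I_k$, so the squared $L_2$ distance factorizes as a convex combination over modes,
$$
	d_{2,\g x_n}(f,h)^2 = \frac{1}{n}\sum_{k=1}^C\sum_{i\in I_k}|f_k(x_i)-h_k(x_i)|^2 = \sum_{k=1}^C \frac{n_k}{n}\, d_{2,\g x_{I_k}}(f_k,h_k)^2 ,
$$
where the empty-mode terms ($n_k=0$) are read as $0$. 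I would then, for each $k$ with $n_k>0$, take $h_k$ from a smallest $\epsilon$-net of $\bar{\F}_k$ for the pseudo-metric $d_{2,\g x_{I_k}}$ (for empty $I_k$ the component is irrelevant on the sample and a single arbitrary choice suffices), and form $H_{\g c}$ as the product of these nets, all classified by $g_{\g c}$. Since the choices of the $h_k$ are independent, $d_{2,\g x_{I_k}}(f_k,h_k)\le\epsilon$ can be achieved simultaneously across modes, and because the weights $n_k/n$ sum to one the identity above gives $d_{2,\g x_n}(f,h)^2\le\epsilon^2$. Hence $H=\bigcup_{\g c\in\G_{\g x_n}}H_{\g c}$ is an $\epsilon$-net of $\bar{\F}_{\G}$.

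For the cardinality, each net on the subsample has size at most $\mathcal{N}(\epsilon,\bar{\F}_k,d_{2,\g x_{I_k}})\le\mathcal{N}_2(\epsilon,\bar{\F}_k,n_k)$, so $|H_{\g c}|\le\prod_{k=1}^C\mathcal{N}_2(\epsilon,\bar{\F}_k,n_k)$. The remaining difficulty, and the step where the GC hypothesis enters, is that the component classes are covered at different sample sizes $n_k\le n$, so these factors cannot be compared to the target directly. Here I would invoke the monotonicity result of Appendix~\ref{sec:addresults}, namely that for uniform GC classes $n\mapsto\mathcal{N}_2(\epsilon,\bar{\F}_k,n)$ is non-increasing; since $n_k\le n$ this yields $\mathcal{N}_2(\epsilon,\bar{\F}_k,n_k)\le\mathcal{N}_2(\epsilon,\bar{\F}_k,n)$ for every $k$. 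Summing over the at most $\Pi_{\G}(n)$ classifications in $\G_{\g x_n}$ then gives the claimed bound, using $|\G_{\g x_n}|\le\Pi_{\G}(n)$ from the definition of the growth function.

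I expect the main obstacle to be precisely this use of the GC monotonicity to lift the per-subsample uniform covering numbers to the common size $n$: without it one is stuck with the sample-dependent sizes $n_k$, and the product cannot be bounded uniformly. The secondary technical points, namely handling the empty-mode case and checking that the per-mode net inequalities hold simultaneously, are routine once the weighted decomposition is set up.
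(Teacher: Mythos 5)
Your proposal is correct and follows essentially the same route as the paper's proof: per-classification restricted pseudo-metrics on the subsamples $\g x_{I_k}$, the convex-combination identity for $d_{2,\g x_n}^2$ with weights $n_k/n$, passage to uniform covering numbers at size $n_k$, the Appendix monotonicity lemma for uniform GC classes to lift these to size $n$, and the growth function to count classifications. The only quibble is a verbal slip: you call the map $n\mapsto\mathcal{N}_2(\epsilon,\bar{\F}_k,n)$ \emph{non-increasing} while using the inequality $\mathcal{N}_2(\epsilon,\bar{\F}_k,n_k)\le\mathcal{N}_2(\epsilon,\bar{\F}_k,n)$ for $n_k\le n$, which is \emph{non-decreasingness} and is indeed what Lemma~\ref{lem:uniformcovmonotone} establishes.
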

\begin{proof}
For each possible classification $\g c\in \G_{\g x_n}$ of $\g x_n$, we consider $C$ empirical pseudo-metrics $d_{x_i: c_i=k}$ defined as $d_{2,\g x_n}$ but on a restricted set of points of cardinality $n_k = \sum_{i=1}^n \I{c_i = k}$: $\forall (f,f')\in\left(\R^{\X}\right)^2$,
$$
	d_{x_i: c_i=k}(f,f') = \left(\frac{1}{n_k}\sum_{i : c_i = k} (f(x_i) - f'(x_i))^2 \right)^{\frac{1}{2}}.
$$
For each such distance, we build a proper $\epsilon$-net of $\bar{\F}_k$ of cardinality $\mathcal{N}(\epsilon, \bar{\F}_k, d_{x_i: c_i=k})$. 
Then, we build a set $H_{\g c}$ of functions $h$ such that $h(x_i) = h_{c_i}(x_i)$ with $(h_k)_{1\leq k\leq C}$ taken from the product of these $\epsilon$-nets, so that 
$$
	|H_{\g c}| \leq \prod_{k=1}^C \mathcal{N}(\epsilon, \bar{\F}_k, d_{x_i: c_i=k}),
$$
where the covering numbers depend on $\g x_n$ as usual, but also on $\g c$ via the definition of the pseudo-distances. 

Next, we consider a set $H = \bigcup_{\g c\in \G_{\g x_n}} H_{\g c}$ which contains at most 
$$
	|H| \leq \sum_{\g c\in \G_{\g x_n}} |H_{\g c}| \leq  \sum_{\g c\in \G_{\g x_n}}\prod_{k=1}^C \mathcal{N}(\epsilon, \bar{\F}_k, d_{x_i: c_i=k})
$$
functions.

Following the proof of Lemma~\ref{lem:coveringpwsLp}, for  any $f\in\bar{\F}_{\G}$, there is some $\g c\in  \G_{\g x_n}$ such that for all $h\in H_{\g c} \subseteq H$,
\begin{align*}
	d_{2,\g x_n}(f, h)^2 &= \frac{1}{n} \sum_{i=1}^n (f(x_i) - h(x_i))^2 \\
	&= \frac{1}{n} \sum_{i=1}^n (f_{c_i}(x_i) - h_{c_i}(x_i))^2\\
	&= \frac{1}{n} \sum_{k=1}^C \sum_{i : c_i = k}  (f_k(x_i) - h_k(x_i))^2\\
	&=  \sum_{k=1}^C \frac{n_k}{n} \frac{1}{n_k} \sum_{i : c_i = k}  (f_k(x_i) - h_k(x_i))^2\\	
	&=  \sum_{k=1}^C \frac{n_k}{n} d_{x_i : c_i = k }(f_k, h_k)^2.
\end{align*}
Thus, by the fact that $\sum_{k=1}^C \frac{n_k}{n} = 1$, $d_{2,\g x_n}(f, h)^2$ is expressed as a convex combination of squared sub-distances $d_{x_i : c_i = k }(f_k, h_k)^2$. These squared sub-distances being positive, their convex combination can be bounded by their maximum and we obtain
$$
	d_{2,\g x_n}(f, h)^2  \leq \max_{k\in[C]} d_{x_i : c_i = k}(f_k, h_k)^2 .
$$
By construction, among all the functions $h$ in $H_{\g c}$, there is at least one  such that, for all $k\in[C]$, $h_k$ is the center of an $\epsilon$-ball containing $f_k\in \bar{\F}_k$, i.e., $d_{x_i : c_i = k}(f_k, h_k)^2 \leq \epsilon^2$. Thus, there is some $h\in H$ such that
$$
	d_{2,\g x_n}(f, h)^2  \leq \epsilon^2,
$$
which proves that $H$ is an $\epsilon$-net of $\bar{\F}_{\G}$.

Now, we can improve the bound on $|H|$ by using uniform covering numbers. In particular, for any $\g c\in \G_{\g x_n}$, 
$$
	 \mathcal{N}(\epsilon, \bar{\F}_k, d_{x_i: c_i=k}) \leq\!\sup_{\g x_{n_k} \subset \X}\!\mathcal{N}(\epsilon, \bar{\F}_k, d_{2,\g x_{n_k}}) =  \mathcal{N}_2(\epsilon, \bar{\F}_k, n_k) .
$$
Thus, 
by using Lemma~\ref{lem:uniformcovmonotone} in Appendix~\ref{sec:addresults}:
\begin{align*}
	|H| &\leq  \sum_{\g c\in \G_{\g x_n}}  \prod_{k=1}^C \mathcal{N}_2(\epsilon, \bar{\F}_k, n_k) \\
	&\leq  \sum_{\g c\in \G_{\g x_n}}  \prod_{k=1}^C \mathcal{N}_2(\epsilon, \bar{\F}_k, n) \\
	&\leq  \Pi_{\G}(n) \prod_{k=1}^C \mathcal{N}_2(\epsilon, \bar{\F}_k, n) .
\end{align*}
\end{proof}

\subsubsection{Metric entropy bounds}

The decomposition results above readily yield general bounds on the metric entropy, $\log \mathcal{N}(\epsilon, \bar{\F}_{\G}, d_{2,\g x_n})$, of a PWS class $\bar{\F}_{\G}$ to be used in~\eqref{eq:chaining} and expressed in terms of Natarajan and fat-shattering dimensions.
\begin{definition}[Natarajan dimension]
Let $\G$ be a class of functions from $\X$ into $[C]$. A set $\{x_i\}_{i=1}^n\subset\X$ is said to be shattered by $\G$ if there exist two functions $a$ and $b$ from $\X$ into $[C]$ such that for every $i\in[n]$, $a(x_i) \neq b(x_i)$ and for every subset $I\subseteq [n]$, there is a function $g\in\G$ satsifying: $\forall i\in I$, $g(x_i) = a(x_i)$ and $\forall i\in[n]\setminus I$, $g(x_i)=b(x_i)$. The {\em Natarajan dimension} $d_{\G}$ of $\G$ is the maximal cardinality of a set $\{x_i\}_{i=1}^n\subset\X$ shattered by $\G$, if such maximum exists. 
Otherwise, $\mathcal{F}$ is said to have infinite Natarajan dimension. 
\end{definition}
\begin{definition}[Fat-shattering dimension \cite{KeaSch94}]
\label{def:fat}
Let $\mathcal{F}$ be a class of real-valued functions on $\X$.
For $\epsilon >0$, a set $\{ x_i\}_{i=1}^n\subset\X$ is said to be {\em ${\epsilon}$-shattered} by $\mathcal{F}$ if
there is a witness $\g b_n\in\R^n$ such that, for every subset $I\subseteq [n]$, there is a function
$f \in \mathcal{F}$ satisfying: 
$\forall i\in I$, $f(x_i) \geq b_i + \epsilon$ and $\forall i\in[n]\setminus I$, $f(x_i)\leq b_i - \epsilon$. 
The {\em fat-shattering dimension with margin $\epsilon$} of the class
$\mathcal{F}$, $d_{\F}(\epsilon)$, 
is the maximal cardinality of a set $\{ x_i\}_{i=1}^n\subset\X$
${\epsilon}$-shattered by $\mathcal{F}$, if such maximum exists.
Otherwise, $\mathcal{F}$ is said to have infinite
fat-shattering dimension with margin $\epsilon$.
\end{definition}

In particular, our first decomposition result in Lemma~\ref{lem:coveringpwsLp} yields the following metric entropy bound.
\begin{proposition}[PWS metric entropy bound 1]
\label{prop:metricentropy1}
Given a PWS class $\F_{\G}$, let $d_{\G}$ denote the Natarajan dimension of $\G$ and $d_{\F}(\epsilon) = \max_{k\in[C]} d_{\bar{\F}_k}(\epsilon)$ denote the pointwise maximum of the fat-shattering dimensions of the $\bar{\F}_k$'s. Then, for any $\epsilon\in(0,1]$ and $n\in\N^*$, 
$$
	\log \mathcal{N}(\epsilon, \bar{\F}_{\G}, d_{2,\g x_n}) \leq d_{\G}\log \frac{C e n}{2d_{\G}} + 6 C d_{\F}\left(\frac{\epsilon}{4}\right)\log^2 \frac{2 e n }{  \epsilon } .
$$
\end{proposition}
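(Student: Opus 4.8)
The plan is to combine the covering-number decomposition of Lemma~\ref{lem:coveringpwsLp} with two classical Sauer--Shelah-type bounds: one controlling the capacity of the classifier set $\G$ through its Natarajan dimension, the other controlling the capacity of each component class $\bar{\F}_k$ through its fat-shattering dimension. Since the two terms on the right-hand side of the statement are \emph{additive} while Lemma~\ref{lem:coveringpwsLp} is \emph{multiplicative}, the natural first move is to pass to logarithms, which also dictates working with the $q=\infty$ version, where the rescaling factor $C^{1/q}$ collapses to $1$.

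Concretely, I would first chain \eqref{eq:coveringl2inf} with Lemma~\ref{lem:coveringpwsLp} taken at $q=\infty$ to obtain
\begin{equation*}
\mathcal{N}(\epsilon, \bar{\F}_{\G}, d_{2,\g x_n}) \leq \mathcal{N}(\epsilon, \bar{\F}_{\G}, d_{\infty,\g x_n}) \leq \Pi_{\G}(n)\prod_{k=1}^C \mathcal{N}(\epsilon, \bar{\F}_k, d_{\infty,\g x_n}),
\end{equation*}
and then take the logarithm to split the bound into a growth-function term and a sum of component metric entropies:
\begin{equation*}
\log \mathcal{N}(\epsilon, \bar{\F}_{\G}, d_{2,\g x_n}) \leq \log \Pi_{\G}(n) + \sum_{k=1}^C \log \mathcal{N}(\epsilon, \bar{\F}_k, d_{\infty,\g x_n}).
\end{equation*}
It then suffices to bound the two pieces separately and to match them, respectively, to the two terms of the statement.

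For the first piece, I would invoke the multi-category (Natarajan-type) Sauer--Shelah lemma recalled in Appendix~\ref{sec:tools}, which bounds the growth function of Definition~\ref{def:growthfunction} by $\Pi_{\G}(n) \leq \left(\frac{Cen}{2d_{\G}}\right)^{d_{\G}}$; its logarithm is exactly the first term $d_{\G}\log\frac{Cen}{2d_{\G}}$. For the second piece, I would apply to each component class $\bar{\F}_k$ separately an $L_\infty$ Sauer--Shelah bound expressing the uniform $L_\infty$ metric entropy in terms of the fat-shattering dimension of Definition~\ref{def:fat}, namely $\log \mathcal{N}(\epsilon, \bar{\F}_k, d_{\infty,\g x_n}) \leq 6\,d_{\bar{\F}_k}(\epsilon/4)\log^2\frac{2en}{\epsilon}$; this is the $q=\infty$, $n$-dependent counterpart of the dimension-free bound of Lemma~\ref{lem:sauerdimensionfree}. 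Using $d_{\bar{\F}_k}(\epsilon/4)\leq d_{\F}(\epsilon/4)=\max_{k\in[C]}d_{\bar{\F}_k}(\epsilon/4)$ at the fixed scale $\epsilon/4$ and summing the $C$ identical upper bounds then produces the second term $6C\,d_{\F}(\epsilon/4)\log^2\frac{2en}{\epsilon}$.

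I expect the bookkeeping to be routine, and the only genuine subtlety to lie in selecting $L_\infty$ capacity estimates with precisely the stated constants. In particular, the passage from $d_{2,\g x_n}$ to $d_{\infty,\g x_n}$ through \eqref{eq:coveringl2inf} is exactly what forces the $n$-dependent fat-shattering bound, hence the $\log^2\frac{2en}{\epsilon}$ factor rather than a dimension-free estimate; I would need to verify that the margin rescaling (the $\epsilon/4$ argument) and the leading constant $6$ are indeed those delivered by the cited $L_\infty$ Sauer--Shelah lemma when applied to the clipped, $[-\tfrac12,\tfrac12]$-valued classes. A final minor check is that replacing each $d_{\bar{\F}_k}(\epsilon/4)$ by the pointwise maximum $d_{\F}(\epsilon/4)$ is done at fixed $\epsilon$, so that collapsing the $C$ component bounds into a single maximal fat-shattering dimension incurs no additional loss in scale.
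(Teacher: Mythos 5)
Your proposal follows the paper's proof exactly: pass to $d_{\infty,\g x_n}$ via \eqref{eq:coveringl2inf}, decompose with Lemma~\ref{lem:coveringpwsLp} at $q=\infty$, bound $\log\Pi_{\G}(n)$ by Lemma~\ref{lem:sauernatarajan}, and bound each component metric entropy through its fat-shattering dimension. The only detail you defer is the one the paper actually carries out: the bound $6\,d_{\F}(\epsilon/4)\log^2\frac{2en}{\epsilon}$ is not an off-the-shelf lemma but is derived from Lemma~\ref{lem:alon} (with $M=1/2$) using $d_{\F}(\epsilon/4)\geq 1$, $2/\log 2<3$ and $\sqrt{n}<en$, and the degenerate case $d_{\F}(\epsilon/4)=0$ must be handled separately by observing that each $\bar{\F}_k$ then has $d_{\infty,\g x_n}$-diameter below $\epsilon$, so a single ball covers it.
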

\begin{proof}
By application of~\eqref{eq:coveringl2inf} and Lemma~\ref{lem:coveringpwsLp} with $q=\infty$, we have
$$
	\log \mathcal{N}(\epsilon, \bar{\F}_{\G}, d_{2,\g x_n}) \leq \log \Pi_{\G}(n) + C \max_{k\in[C]} \log\mathcal{N}(\epsilon, \bar{\F}_k, d_{\infty,\g x_n}) .
$$
Then, we use two generalized Sauer-Shelah lemmas: Lemma~\ref{lem:sauernatarajan} in App.~\ref{sec:tools} to bound the first term with the Natarajan dimension and the result from \cite{Alon97} (in the form of Lemma~\ref{lem:alon} with $M=1/2$) to bound the last one in terms of the fat-shattering dimension. This yields
$$
\max_{k\in[C]} \log\mathcal{N}(\epsilon, \bar{\F}_k, d_{\infty,\g x_n}) \leq 2 d_{\F}\left(\frac{\epsilon}{4}\right)\log_2\frac{2 e n }{d_{\F}(\frac{\epsilon}{4}) \epsilon }  \log \frac{4 n }{\epsilon^2 } , 
$$
which, after using $d_{\F}(\epsilon/4)\geq 1$ and the relations $2/\log 2 <3$ and $\sqrt{n}<e n$, gives the result. If $d_{\F}(\epsilon/4) = 0$, the statement holds trivially since for all $k\in[C]$, $(f,f')\in\bar{\F}_k^2$ and $x\in\X$, $|f(x) - f'(x)|/2 < \epsilon/4$, which implies that $d_{\infty,\g x_n}(f,f')<\epsilon/2$ and thus that $\mathcal{N}(\epsilon, \bar{\F}_k, d_{\infty,\g x_n}) \leq \mathcal{N}(\frac{\epsilon}{2}, \bar{\F}_k, d_{\infty,\g x_n}) = 1$. 
\end{proof}

Conversely, our second decomposition result in Lemma~\ref{lem:coveringpws} yields the following bound. 
\begin{proposition}[PWS metric entropy bound 2]
\label{prop:metricentropy2}
Given a PWS class $\F_{\G}$ with uniform GC classes $\bar{\F}_k$, $1\leq k\leq C$, let $d_{\G}$ denote the Natarajan dimension of $\G$ and $d_{\F}(\epsilon) = \max_{k\in[C]} d_{\bar{\F}_k}(\epsilon)$ denote the pointwise maximum of the  fat-shattering dimensions of the $\bar{\F}_k$'s. Then, for any $\epsilon\in(0,1]$ and $n\in\N^*$,  
$$
	\log \mathcal{N}(\epsilon, \bar{\F}_{\G}, d_{2,\g x_n}) \leq d_{\G}\log \frac{C e n}{2d_{\G}} + 20 C d_{\F}\left(\frac{\epsilon}{96}\right) \log \frac{7}{\epsilon}.
$$
\end{proposition}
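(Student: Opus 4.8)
The plan is to mirror the proof of Proposition~\ref{prop:metricentropy1}, but to feed the metric entropy bound with the $L_2$-based decomposition of Lemma~\ref{lem:coveringpws} in place of the $L_\infty$-based one of Lemma~\ref{lem:coveringpwsLp}. Starting from Lemma~\ref{lem:coveringpws}, taking logarithms, and bounding the sum over $k$ by $C$ times its largest term, I would obtain
$$
\log \mathcal{N}(\epsilon, \bar{\F}_{\G}, d_{2,\g x_n}) \leq \log \Pi_{\G}(n) + C \max_{k\in[C]} \log \mathcal{N}_2(\epsilon, \bar{\F}_k, n),
$$
so that, exactly as in Proposition~\ref{prop:metricentropy1}, the metric entropy splits into a classifier capacity term $\log \Pi_{\G}(n)$ and a worst-case component term. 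The difference is that the component term now involves $L_2$ uniform covering numbers rather than the $L_\infty$ covering numbers used before.

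The first term is handled identically to Proposition~\ref{prop:metricentropy1}: the generalized Sauer-Shelah lemma for the Natarajan dimension (Lemma~\ref{lem:sauernatarajan}) bounds $\log \Pi_{\G}(n)$ by $d_{\G}\log\frac{Cen}{2d_{\G}}$, reproducing the first summand of the claim verbatim. The decisive difference lies in the second term. Instead of invoking the Alon et al. bound (Lemma~\ref{lem:alon}) on $L_\infty$ covering numbers, which carries an extra logarithmic factor and an explicit $n$-dependence, I would bound each $\log \mathcal{N}_2(\epsilon, \bar{\F}_k, n)$ by the dimension-free Sauer-Shelah lemma (Lemma~\ref{lem:sauerdimensionfree}), applied with $M=\frac12$. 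This yields a bound of the form $\log \mathcal{N}_2(\epsilon, \bar{\F}_k, n) \leq 20\, d_{\bar{\F}_k}(\epsilon/96)\log\frac{7}{\epsilon}$, the constants $20$, $96$ and $7$ being inherited directly from that lemma. Taking the maximum over $k\in[C]$ gives $d_{\F}(\epsilon/96)$, and multiplying by $C$ produces the second summand; combining the two yields the stated inequality.

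The reason this proposition can improve on Proposition~\ref{prop:metricentropy1} — removing both the $n$-dependence and the superfluous log factor in the component term — is precisely the availability of a dimension-free $L_2$ covering number bound for Glivenko-Cantelli classes, which is why the GC hypothesis on the $\bar{\F}_k$'s is assumed here but not in Proposition~\ref{prop:metricentropy1}. The essential analytic content is therefore encapsulated in Lemma~\ref{lem:sauerdimensionfree}, and the remaining work is mostly checking that its scale and constants align with the target $\epsilon/96$ and $\log(7/\epsilon)$ and that the prefactor $20$ absorbs any numerical slack. As in Proposition~\ref{prop:metricentropy1}, I expect the only genuine bookkeeping subtlety to be the degenerate case $d_{\F}(\epsilon/96)=0$: when no $\bar{\F}_k$ is $(\epsilon/96)$-shattered, the range of each component class is too small for its $\epsilon$-covering number to exceed $1$, so the component term vanishes and the bound holds with the Natarajan term alone. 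Beyond this, I do not anticipate any real obstacle, since the hard estimate is delegated to Lemma~\ref{lem:sauerdimensionfree}.
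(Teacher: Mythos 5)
Your proof is correct and follows exactly the paper's own route: apply Lemma~\ref{lem:coveringpws}, bound $\log\Pi_{\G}(n)$ via Lemma~\ref{lem:sauernatarajan}, and bound each $\log\mathcal{N}_2(\epsilon,\bar{\F}_k,n)$ via the dimension-free Sauer--Shelah Lemma~\ref{lem:sauerdimensionfree} with $M=1/2$ (whence $13M=6.5\leq 7$ and the constants $20$ and $96$). One small precision: the uniform GC hypothesis is consumed by Lemma~\ref{lem:coveringpws} itself (through the monotonicity of uniform covering numbers, Lemma~\ref{lem:uniformcovmonotone}), not by the dimension-free covering bound, but this does not affect the validity of your argument.
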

\begin{proof}
By application of Lemma~\ref{lem:coveringpws}, we have 
$$
	\log \mathcal{N}(\epsilon, \bar{\F}_{\G}, d_{2,\g x_n}) \leq \log \Pi_{\G}(n) + C \max_{k\in[C]} \log \mathcal{N}_2(\epsilon, \bar{\F}_k, n) .
$$
Then, we bound the first term as in Proposition~\ref{prop:metricentropy1} by Lemma~\ref{lem:sauernatarajan} and the last one with the dimension-free Sauer-Shelah lemma from \cite{Mendelson03} (in the form of Lemma~\ref{lem:sauerdimensionfree} with $M=1/2$).
\end{proof}

These two bounds share the same dependency on $C$, but the first one in Proposition~\ref{prop:metricentropy1} will lead to a worse dependency on $n$ when used in~\eqref{eq:chaining} for chaining. However, as discussed in \cite{Musayeva17}, this is not due to the dimension-free nature of the bound in Proposition~\ref{prop:metricentropy2} (which is independent of $n$). Indeed, the dependency on $n$ of the final chained bound is mostly impacted by how the metric entropy depends on $\frac{1}{\epsilon}$. Here, Proposition~\ref{prop:metricentropy2} exhibits a $O(\log \frac{1}{\epsilon})$ whereas the bound in Proposition~\ref{prop:metricentropy1} is in $O(\log^2 \frac{1}{\epsilon})$, which will translate into a $\sqrt{\log n}$ gain for the chained bound based on Proposition~\ref{prop:metricentropy2} (note however that due to the constants the true gain would be only visible in practice for very large $n$).

\subsubsection{Applications}

We now turn to specific examples of PWS classes. In particular, we focus on Euclidean input spaces $\X\subseteq \R^d$ with $d\geq 2$ and PWS classes constructed from a set of linear classifiers, i.e., for which 
\begin{equation}\label{eq:Glinear}
	\G = \{ g \in[C]^{\X} : g(x) = \argmax_{k\in[C]}  \inner{w_k}{x},\ w_k \in \R^d \}. 
\end{equation}
In this case, the Natarajan dimension of $\G$ satisfies $(C-1)(d-1)\leq d_{\G}\leq Cd$ (see, e.g., Corollary 29.8 in \cite{Shalev14}) and Lemma~\ref{lem:sauernatarajan} yields 
\begin{equation}\label{eq:piGlinear}
	\log \Pi_{\G}(n) \leq Cd\log \left( \frac{n e C}{2(C-1)(d-1)}\right) \leq  Cd\log \left( 3n\right).
\end{equation}

\paragraph{General PWS classes} 
We start with  general PWS classes $\F_{\G}$ based on linear classifiers as in~\eqref{eq:Glinear} and component function classes $\F_k$ that satisfy a polynomial growth assumption on the fat-shattering dimension, as considered, e.g., in \cite{Mendelson02,Guermeur16,Musayeva17}: 
\begin{equation}\label{eq:fatpoly}
	\forall \epsilon>0, \quad d_{\F}(\epsilon) = \max_{k\in[C]} d_{\bar{\F}_k}(\epsilon) \leq \alpha \epsilon^{-\beta} 
\end{equation}
for some positive numbers $\alpha$ and $\beta$. For example, if the $\F_k$'s are implemented by neural networks with $l$ hidden layers,~\eqref{eq:fatpoly} can be satisfied with $\beta=2(l+1)$ \cite{Bartlett98}. 
Note that~\eqref{eq:fatpoly} implies that the $\bar{\F}_k$'s are uniform GC classes \cite{Alon97}. 

Using the general metric entropy bound of Proposition~\ref{prop:metricentropy2} with~\eqref{eq:piGlinear} and the assumption~\eqref{eq:fatpoly} in~\eqref{eq:chaining} leads to 
\begin{equation}\label{eq:chainSN}
	\hat{\mathcal{R}}_n(\bar{\F}_{\G}) \leq 2^{-N} + \frac{6 S_N}{\sqrt{n}}
\end{equation}
 with
\begin{align*}
	S_N &= \sum_{j=1}^N 2^{-j} \sqrt{d_{\G}\log \frac{C e n}{2d_{\G}} + 20 C d_{\F}\left(\frac{2^{-j}}{96}\right) \log (7 \cdot 2^{j} )} \\
		&\leq \sqrt{ C }\sum_{j=1}^N 2^{-j} \sqrt{ d \log (3 n) + 20\cdot 96^{\beta} \alpha 2^{j\beta} \log (2^{j+3})}.
\end{align*}
Therefore, the dependency on the number of modes $C$ is radical for all such PWS classes and the degree $\beta$ of the polynomial growth of the fat-shattering dimensions only influences the convergence rate in $n$ of the Rademacher complexity. This convergence rate can be specified as follows:
\begin{align*}
	S_N &\leq 2^{\frac{\beta}{2}}\sqrt{ C }\sum_{j=1}^N 2^{j(\frac{\beta}{2}-1)} \sqrt{\frac{ d \log (3 n)}{2^{j\beta}} + 20\cdot 96^{\beta} \alpha  \log (2^{j+3})} \\
	&\leq  2^{\frac{\beta}{2}}\sqrt{ C }\sum_{j=1}^N 2^{j(\frac{\beta}{2}-1)} \sqrt{\frac{ d \log (3 n)}{2^{\beta}} +  14\cdot 96^{\beta} \alpha  (j+3)} \\
	&\leq  2^{\frac{\beta}{2}}\sqrt{ C }\sum_{j=1}^N 2^{j(\frac{\beta}{2}-1)} \sqrt{\frac{ d \log (3 n)}{2^{\beta}} +  56\cdot 96^{\beta} \alpha N} .	
\end{align*}
By setting $N= \lceil\log_2 n^{\frac{1}{\beta} }\rceil 
\leq \frac{1}{\beta}\log_2 ( 2^{\beta} n)$, this gives, for all $\beta \geq 1$, 
\begin{align*}
	S_N &\leq \sqrt{C  \left[ d +   \frac{56\cdot 192^{\beta} \alpha}{\beta}\right]\log_2 ( 2^{\beta} n)} \sum_{j=1}^N 2^{j(\frac{\beta}{2}-1)} ,
\end{align*}
while $2^{-N}\leq n^{-1/\beta}$ for the first term of~\eqref{eq:chainSN}. 
Thus, for $\beta=2$, we obtain 
$$
	S_N < \sqrt{ C  \left[d + 28\cdot 192^2 \alpha\right]\log_2 ( 4n)} \frac{1}{2}\log_2 (4n)
$$
and, with respect to $n$, 
\begin{equation}\label{eq:convratepwsbeta2}
	\hat{\mathcal{R}}_n(\bar{\F}_{\G}) = O\left( \frac{\log^{\frac{3}{2}}n}{\sqrt{n}} \right) .
\end{equation}
For $\beta>2$, we have
\begin{align}
	\sum_{j=1}^{N} 2^{j\left(\frac{\beta}{2}-1\right)}  &= \frac{2^{\left(\frac{\beta}{2}-1\right)\left(N+1\right)}-2^{\left(\frac{\beta}{2}-1\right)}}{2^{\left(\frac{\beta}{2}-1\right)}-1} \nonumber
	 < \frac{2^{\left(\frac{\beta}{2}-1\right)\left(N+1\right)}}{2^{\left(\frac{\beta}{2}-1\right)}-1} \\
	& < \frac{4^{\left(\frac{\beta}{2}-1\right)}}{2^{\left(\frac{\beta}{2}-1\right)}-1}  n^{\left(\frac{1}{2}-\frac{1}{\beta}\right)}, \label{eq:series2j}
\end{align}
which gives  
\begin{equation}\label{eq:convratepwsbetaSUP2}
	\hat{\mathcal{R}}_n(\bar{\F}_{\G}) = O\left(  \frac{\sqrt{\log n}}{n^{\frac{1}{\beta}}}  \right) .
\end{equation}

Overall, the convergence rates obtained are similar to the ones derived in \cite{Guermeur16} for multi-category classification based on score function classes satisfying~\eqref{eq:fatpoly}. However, thanks to Lemma~\ref{lem:coveringpws}, the radical dependency on $C$ is more favorable than the dependency on the number of categories in the result of \cite{Guermeur16}. 

\paragraph{Kernel-based PWS classes}

Let $\H$ be a reproducing kernel Hilbert space (RKHS) of reproducing kernel $K$ \cite{Berlinet04} and consider PWS classes with component function classes from this RKHS:
\begin{equation}\label{eq:Fkernel}
	\F_k = \{ f_k \in\H : \|f_k\|_{\H} \leq R_{\H}\} .
\end{equation}
Since the covering numbers and the fat-shattering dimensions can only decrease when going from $\F_k$ to $\bar{\F}_k$, 
the bounds on $d_{\F_k}(\epsilon)$ given by Lemma~\ref{lem:fatlinear} in App.~\ref{sec:tools} with $\phi : x\mapsto K(x,\cdot)$ also apply to the clipped component function classes and we have
$$
	\forall \epsilon>0,\quad	d_{\F}(\epsilon) \leq R_x^2 R_{\H}^2 \epsilon^{-2},
$$
where $R_x = \sup_{x\in\X} \|K(x,\cdot)\|_{\H} =  \sup_{x\in\X} \sqrt{K(x,x)}$. 
Thus, for $\G$ as in~\eqref{eq:Glinear}, the results above are applicable with $\alpha=R_x^2 R_{\H}^2$ and $\beta=2$. This yields
\begin{align*}
\hat{\mathcal{R}}_n(\bar{\F}_{\G}) 		
		&\leq \frac{1}{\sqrt{n}} + 3\log^{\frac{3}{2}}_2 ( 4 n)\sqrt{ \frac{C}{n}   \left(d + 28\cdot 192^2 R_x^2 R_{\H}^2\right)} 
\end{align*}
and a radical dependency on $C$ with a convergence rate in $O\left( \log^{3/2}(n) / \sqrt{n} \right)$.

\paragraph{PWA classes}

Let $\F_{\G}$ be a piecewise affine (PWA) class corresponding to Definition~\ref{def:pws} with $\G$ as in~\eqref{eq:Glinear} and linear function classes
\begin{equation}\label{eq:Fklinear}
	\F_k = \{f_k\in \R^{\X} : f_k(x) = \inner{w_k}{x},\ \|w_k\|_2 \leq R_w\}.
\end{equation}
The general results above could be applied similarly to this case (with $\phi:x\mapsto x$ in Lemma~\ref{lem:fatlinear}) in order to yield a bound in the flavor of the one obtained for kernel-based PWS classes. A slightly more efficient approach uses estimates of covering numbers that do not involve the fat-shattering dimension, as those given by Theorem~3 in \cite{Zhang02}. Thus, instead of the metric entropy bounds of the previous subsection, one could use Lemma~\ref{lem:coveringpws} with such estimates in the chaining formula~\eqref{eq:chaining}. Yet, the convergence rate would remain in $O\left( \log^{3/2}(n) / \sqrt{n} \right)$. 

In fact, the metric entropy bound of \cite{Zhang02} is suitable for large dimensional cases as it only involves the logarithm of the input dimension $d$. Yet, since in our case the dimension already appears outside of log terms when bounding $\log\Pi_{\G}(n)$ by \eqref{eq:piGlinear}, we can use more simple results that depend linearly on $d$, but enjoy a much better dependence on $\epsilon$. In particular, the following can be easily derived from classical results on the covering of unit balls in $\R^d$ (see, e.g., Exercise 2.2.14 in~\cite{Talagrand14}):
\begin{equation}\label{eq:coveringlinearRn}
	\forall \epsilon\leq R_x R_w,\quad \log \mathcal{N}_{\infty}(\epsilon, \F_k, n) \leq d \log \frac{(2+R_w)R_x}{\epsilon}  .
\end{equation}
Using this in~\eqref{eq:chaining} with~\eqref{eq:coveringl2inf}, Lemma~\ref{lem:coveringpwsLp} and~\eqref{eq:piGlinear}, we obtain
\begin{align*}
	\hat{\mathcal{R}}_n (\bar{\F}_{\G}) 
		&\leq 2^{-N} + 6\sqrt{\frac{ C d}{n}} \sum_{j=1}^N 2^{-j} \sqrt{  \log \left(3n (2+R_w)R_x 2^j\right) } \\
		&< 2^{-N} + 6\sqrt{\frac{Cd }{n}   \log \left(3n (2+R_w)R_x 2^N\right) }.
\end{align*}
Setting $N= \left\lceil \log_2\sqrt{n}\right\rceil \leq  \log_2 (2\sqrt{n})$, then yields an improved convergence rate:
\begin{align*}
	\hat{\mathcal{R}}_n (\bar{\F}_{\G}) &< \frac{1}{\sqrt{n}} +  6\sqrt{\frac{Cd }{n}  \log \left(6 (2+R_w)R_x n^{3/2}\right) }  \\
		&=O\left( \sqrt{\frac{\log n}{n}} \right) .
\end{align*}

\subsection{Error bounds for switching regression}
\label{sec:errorswitching}

We now come back to the switching regression setting of Sect.~\ref{sec:switchingreg}. 
Here, the focus is on the switching loss class of functions from $\Z=\X\times \Y$ to $\R$, 
\begin{equation}\label{eq:LpFS}
	\L_{p,\F^S}^S = \{ \ell  \in [0,1 ]^{\Z} : \ell(x,y) = \min_{k\in[C]} |y - \bar{f}_k(x)|^p,\ \bar{f}\in \bar{\F^S} \},
\end{equation}
induced by the vector-valued function class $\F^S$ (Def.~\ref{def:switching}). Indeed, Theorem~\ref{thm:general} applied to $\L_{p,\F^S}^S$ yields, with probability at least $1-\delta$ and uniformly over $\bar{\F}^S$, the following bound on the $\ell_p$-switching risk of Definition~\ref{def:lplossswitching}:
\begin{equation}\label{eq:firstboundswitch}
	L_p^S(\bar{f}) \leq \hat{L}_{p,n}^S(\bar{f}) + 2 \mathcal{R}_n \left ( \L_{p,\F^S}^S \right )
+ \sqrt{\frac{\log \frac{1}{\delta} }{2n}} .
\end{equation}
To finalize the bound, it remains to estimate the Rademacher complexity $\mathcal{R}_n \left ( \L_{p,\F^S}^S \right )$. 
For this purpose, we consider two decomposition schemes: one at the level of Rademacher complexities and another one at the level of covering numbers.  

\subsubsection{Decomposition of the Rademacher complexity}

In the case of switching regression, we can decompose directly at the level of the Rademacher complexities, without requiring the chaining machinery and covering numbers. 
In particular, we can derive the following decomposition result relating the Rademacher complexity of $\L_{p,\F^S}^S$ to the ones of the component function classes $\F_k$ with a linear dependency on the number $C$ of modes.
\begin{theorem}
\label{thm:switching}
Let $\F^S$ be a vector-valued function class as in Definition~\ref{def:switching}. Then, the Rademacher complexity of $\L_{p,\F^S}^S$~\eqref{eq:LpFS} is bounded by
$$
	\mathcal{R}_n \left ( \L_{p,\F^S}^S \right ) \leq  p \sum_{k=1}^C  \mathcal{R}_n \left ( \F_k\right ) .
$$
\end{theorem}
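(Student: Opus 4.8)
The plan is to peel off the three structural ingredients of the switching loss class $\L_{p,\F^S}^S$ of~\eqref{eq:LpFS} one at a time: the pointwise minimum over the $C$ modes, the clipping $\bar{(\cdot)}$, and the outer power $|\cdot|^p$. I would work at the level of the empirical Rademacher complexity $\hat{\mathcal R}_n(\L_{p,\F^S}^S)$ for a fixed sample $\g z_n=(x_i,y_i)_{1\le i\le n}$ and recover $\mathcal R_n$ by taking the expectation over $\g z_n$ at the very end. The whole point is that the minimum can be removed at a cost that is only \emph{additive} in $C$, which is exactly what produces the announced linear dependency on the number of modes.

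The central tool is a min-decomposition lemma: for real-valued classes $\Phi_1,\dots,\Phi_C$ on $\Z$ and the class $\min_k\Phi_k=\{z\mapsto\min_{k\in[C]}\phi_k(z):\phi_k\in\Phi_k\}$ of pointwise minima over \emph{independent} choices $\phi_k\in\Phi_k$, one has $\hat{\mathcal R}_n(\min_k\Phi_k)\le\sum_{k=1}^C\hat{\mathcal R}_n(\Phi_k)$. I would obtain this from the corresponding statement for maxima, using that $\hat{\mathcal R}_n$ is invariant under a global sign change of the class (replacing $\sigma_i$ by $-\sigma_i$ leaves the law of $\g\sigma_n$ unchanged) together with $\min_k a_k=-\max_k(-a_k)$. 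The max-version I would prove by induction on $C$: the base case $C=2$ follows from $\max(a,b)=\tfrac12(a+b)+\tfrac12|a-b|$, the subadditivity of $\hat{\mathcal R}_n$ with respect to sums of classes, and the contraction principle applied to the $1$-Lipschitz map $t\mapsto|t|$ acting on $\Phi_1-\Phi_2$. Applying the lemma with $\Phi_k=\{(x,y)\mapsto|y-\bar f_k(x)|^p:f_k\in\F_k\}$, and noting that $\L_{p,\F^S}^S=\min_k\Phi_k$ precisely because the components $f_k$ of $f\in\F^S$ are chosen independently, yields $\hat{\mathcal R}_n(\L_{p,\F^S}^S)\le\sum_{k=1}^C\hat{\mathcal R}_n(\Phi_k)$.

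It then remains to bound each $\hat{\mathcal R}_n(\Phi_k)$ by $p\,\hat{\mathcal R}_n(\F_k)$. For this I would apply the Ledoux–Talagrand contraction principle with a per-index map $\psi_i:t\mapsto|y_i-\bar t|^p$. This $\psi_i$ is the composition of the $1$-Lipschitz clipping $t\mapsto\bar t$ with $t\mapsto|y_i-t|^p$, the latter being $p$-Lipschitz on $[-M,M]$ since $|y_i-\bar t|\le 2M=1$ forces $|y_i-\bar t|^{p-1}\le1$ and hence a derivative bounded by $p$; baking the clipping into $\psi_i$ makes it $p$-Lipschitz on all of $\R$ and lets me contract directly from the unclipped class $\F_k$, avoiding a separate clipping step. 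Contraction then gives $\hat{\mathcal R}_n(\Phi_k)=\E_{\g\sigma_n}\sup_{f_k\in\F_k}\frac1n\sum_{i=1}^n\sigma_i\psi_i(f_k(x_i))\le p\,\hat{\mathcal R}_n(\F_k)$. Summing over $k$ and taking the expectation over $\g z_n$ delivers $\mathcal R_n(\L_{p,\F^S}^S)\le p\sum_{k=1}^C\mathcal R_n(\F_k)$.

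The main obstacle is the min/max-decomposition lemma: securing the clean constant $1$ per class (hence the overall linear-in-$C$ factor rather than something worse) requires care in the induction step, where one must verify that the supremum over the independent choices of $\phi_k\in\Phi_k$ behaves additively and that contracting the absolute value introduces no extra constant. By contrast, the sign-invariance of $\hat{\mathcal R}_n$ and the Lipschitz estimate for $t\mapsto|y_i-\bar t|^p$ are routine.
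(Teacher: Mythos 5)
Your proof is correct and follows essentially the same route as the paper: converting the pointwise minimum to a pointwise maximum via the sign-symmetry of the Rademacher variables, decomposing the maximum into a sum of per-mode Rademacher complexities (you re-derive by induction the lemma the paper simply cites from Mohri et al.), and then contracting away the $p$-th power and the clipping with Lipschitz constant $p$. The only cosmetic difference is that you absorb the dependence on $y_i$ into per-index contraction maps, whereas the paper works with the error classes $e_k(x,y)=y-\bar f_k(x)$ and removes the $y$-term by noting that $\E\sum_i\sigma_iY_i=0$; both steps are routine and lead to the same bound.
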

\begin{proof}
Let us define the following classes of functions from $\Z=\X\times \Y$ to $\R$:
\begin{equation}\label{eq:Ek}
	\forall k\in [C],\quad \mathcal{E}_k = \{ e_k \in \R^{\Z} : e_k(x,y) = y-\bar{f}_k(x),\ \bar{f}_k \in \bar{\F_k}\}.
\end{equation}
By using the facts that for any $(a_k)_{1\leq k\leq C} \in\R^C$, $\min_{k\in[C]} a_k = -\max_{k\in[C]} -a_k$ and that $\sigma_i$ and $-\sigma_i$ share the same distribution, we have:
\begin{align*}
	\mathcal{R}_n&\left( \L_{p,\F^S}^S \right) \nonumber\\
	&= \E_{\g X_n \g Y_n \g \sigma_n} \sup_{\bar{f} \in \bar{\F^S}} \frac{1}{n} \sum_{i=1}^n \sigma_i \min_{k\in[C]} |Y_i - \bar{f}_k(X_i ) |^p \\
	&= \E_{\g X_n \g Y_n \g \sigma_n} \sup_{\bar{f} \in \bar{\F^S}}  \frac{1}{n} \sum_{i=1}^n - \sigma_i \max_{k\in[C]} -|Y_i -\bar{f}_k(X_i ) |^p \\
	&= \E_{\g X_n \g Y_n \g \sigma_n} \sup_{\bar{f} \in \bar{\F^S}}  \frac{1}{n} \sum_{i=1}^n \sigma_i \max_{k\in[C]} - |Y_i - \bar{f}_k(X_i ) |^p \\
	&= \E_{\g X_n \g Y_n \g \sigma_n} \sup_{(e_k \in \mathcal{E}_k)_{k\in[C]}}  \frac{1}{n} \sum_{i=1}^n \sigma_i \max_{k\in[C]} - |e_k(X_i,Y_i)|^p \\
	&\leq \sum_{k=1}^C \mathcal{R}_n( -|\mathcal{E}_k|^p ) ,
\end{align*}
where the inequality is obtained by application of Lemma~\ref{lem:mohri} in Appendix~\ref{sec:tools}. 
By taking into account the range of $|\mathcal{E}_k|$, i.e., $[0 , 1]$, and the Lipschitz constant of $\phi(u)=u^p$ for $u$ in that interval, we obtain by contraction (Lemma~\ref{lem:contraction}) that $\mathcal{R}_n( -|\mathcal{E}_k|^p) \leq p\mathcal{R}_n( \mathcal{E}_k )$. Then, following the last steps of the proof of Theorem~\ref{thm:real} (Appendix~\ref{sec:proofreal}) leads to
\begin{align*}
	\mathcal{R}_n( \mathcal{E}_k) \leq \mathcal{R}_n(\bar{\F_k})\leq \mathcal{R}_n(\F_k), 
\end{align*}
which completes the proof.
\end{proof}


For switching linear regression with $\X\subseteq\R^d$ and $\F_k$ set as in~\eqref{eq:Fklinear}, we can combine~\eqref{eq:firstboundswitch} with Theorem~\ref{thm:switching} and Lemma~\ref{lem:radlinear} in App.~\ref{sec:tools} to get the risk bound
\begin{equation}\label{eq:boundswitchinglinear}
	L_p^S(\bar{f}) \leq \hat{L}_{p,n}^S(\bar{f}) + 2p C \frac{R_xR_w}{\sqrt{n}}
+ \sqrt{\frac{\log  \frac{1}{\delta} }{2n}} . 
\end{equation}

For switching nonlinear regression based on component function classes
from an RKHS $\H$ of reproducing kernel $K$ as in~\eqref{eq:Fkernel}, a similar result holds with $R_x = \sup_{x\in\X} \sqrt{ K(x,x)}$ and $R_w=R_{\H}$. 

\subsubsection{Chaining and decomposition of the covering numbers}

In another context, namely, multi-category classification as studied in \cite{Kuznetsov14,Guermeur16,Musayeva17}, the decomposition of the Rademacher complexity in terms of those of the component function classes yields a linear dependency on the number of categories, while chaining and decomposition at the level of covering numbers allows one to obtain a radical dependency. We now evaluate the possibility of reducing the linear dependency on the number of modes $C$ of the bound in Theorem~\ref{thm:switching} with such an approach. 

Consider the risk bound~\eqref{eq:firstboundswitch} based on the Rademacher complexity of the real-valued class $\L_{p,\F^S}^S$ defined in~\eqref{eq:LpFS}. 
We can bound the covering numbers of this class thanks to the structural result of Lemma~\ref{lem:coveringmax} in Appendix~\ref{sec:addresults} as follows. 
\begin{lemma}\label{lem:coveringswitching}
Let $\F^S$ be a vector-valued function class as in Definition~\ref{def:switching}. Then, for $\L_{p,\F^S}^S$ as defined in~\eqref{eq:LpFS}, the following holds for any $q\in[1,\infty)\cup\{\infty\}$:
$$
	\mathcal{N}(\epsilon, \L_{p,\F^S}^S, d_{q,\g z_n}) \leq \prod_{k=1}^C \mathcal{N}\left(\frac{\epsilon}{p C^{1/q}}, \bar{\F}_k, d_{q,\g x_n}\right).
$$
\end{lemma}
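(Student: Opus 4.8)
The plan is to realize $\L_{p,\F^S}^S$ as a class of pointwise minima of $C$ component loss classes, bound its covering number by a product using the structural minimum result of Lemma~\ref{lem:coveringmax}, and then reduce each factor back to $\bar{\F}_k$ through a Lipschitz (contraction) argument at the level of covering numbers. Concretely, for each $k\in[C]$ I would introduce the component loss class
$$
\mathcal{L}_k = \left\{ (x,y)\mapsto |y - \bar{f}_k(x)|^p : \bar{f}_k \in \bar{\F}_k \right\} \subset \R^{\Z}.
$$
Since Definition~\ref{def:switching} lets each coordinate $\bar{f}_k$ of $\bar{f}\in\bar{\F^S}$ range freely over $\bar{\F}_k$, every $\ell\in\L_{p,\F^S}^S$ equals $\min_{k\in[C]}\ell_k$ for a tuple $(\ell_k)_{k\in[C]}\in\prod_{k}\mathcal{L}_k$, so $\L_{p,\F^S}^S$ has exactly the form of pointwise minima to which Lemma~\ref{lem:coveringmax} applies.

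That lemma supplies the bound $\mathcal{N}(\epsilon,\L_{p,\F^S}^S,d_{q,\g z_n})\leq\prod_{k=1}^C\mathcal{N}(\epsilon/C^{1/q},\mathcal{L}_k,d_{q,\g z_n})$. Its mechanism, which I would sketch were the lemma not available, rests on the pointwise inequality $|\min_k a_k - \min_k b_k|\leq\max_k|a_k-b_k|$ together with $\max_k|a_k-b_k|^q\leq\sum_k|a_k-b_k|^q$: if $\ell_k'$ is an $(\epsilon/C^{1/q})$-approximation of $\ell_k$ in $d_{q,\g z_n}$ for every $k$, then $d_{q,\g z_n}(\min_k\ell_k,\min_k\ell_k')^q\leq\sum_k d_{q,\g z_n}(\ell_k,\ell_k')^q<C\,(\epsilon/C^{1/q})^q=\epsilon^q$, and the case $q=\infty$ is identical with $C^{1/\infty}=1$, so the product of nets of the $\mathcal{L}_k$ forms an $\epsilon$-net of the minimum class.

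It remains to pass from $\mathcal{L}_k$ to $\bar{\F}_k$. Here I would exploit the normalization $M=\frac12$: since $y\in[-\frac12,\frac12]$ and clipped functions take values in $[-\frac12,\frac12]$, each argument $|y-\bar{f}_k(x)|$ lies in $[0,1]$, on which $u\mapsto u^p$ is $p$-Lipschitz; combined with $\big||y-a|-|y-b|\big|\leq|a-b|$ this gives the pointwise estimate $\big||y_i-\bar{f}_k(x_i)|^p-|y_i-\bar{h}_k(x_i)|^p\big|\leq p\,|\bar{f}_k(x_i)-\bar{h}_k(x_i)|$, hence $d_{q,\g z_n}(\ell_k,\ell_k')\leq p\,d_{q,\g x_n}(\bar{f}_k,\bar{h}_k)$ (and the same for $q=\infty$, where the $y_i$-terms cancel so that only the $\g x_n$-metric survives). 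Thus the image under $\bar{f}_k\mapsto\big((x,y)\mapsto|y-\bar{f}_k(x)|^p\big)$ of any $(\epsilon/(pC^{1/q}))$-net of $\bar{\F}_k$ in $d_{q,\g x_n}$ is an $(\epsilon/C^{1/q})$-net of $\mathcal{L}_k$ in $d_{q,\g z_n}$, so $\mathcal{N}(\epsilon/C^{1/q},\mathcal{L}_k,d_{q,\g z_n})\leq\mathcal{N}(\epsilon/(pC^{1/q}),\bar{\F}_k,d_{q,\g x_n})$. Substituting this into the product bound yields the claim. I expect the main difficulty to be bookkeeping rather than conceptual: confirming that the Lipschitz constant is precisely $p$ (which hinges on the arguments lying in $[0,1]$, i.e.\ on $M=\frac12$) and that the scale factors $C^{1/q}$ from the minimum and $p$ from the contraction compose cleanly into the single scale $\epsilon/(pC^{1/q})$ uniformly over $q\in[1,\infty)\cup\{\infty\}$.
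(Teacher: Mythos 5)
Your proposal is correct and follows essentially the same route as the paper: realize $\L_{p,\F^S}^S$ as a pointwise-minimum class, invoke Lemma~\ref{lem:coveringmax} for the product bound at scale $\epsilon/C^{1/q}$, and then pass to $\bar{\F}_k$ via a $p$-Lipschitz contraction (valid because $M=\tfrac12$ keeps the arguments in $[0,1]$) together with the observation that the $d_{q,\g z_n}$ distance between error functions equals the $d_{q,\g x_n}$ distance between the underlying $\bar{f}_k$'s. The only cosmetic difference is that you fuse the paper's two reduction steps (covering-number contraction for $u\mapsto u^p$, then the isometry $d_{q,\g z_n}(e_k,e_k')=d_{q,\g x_n}(\bar f_k,\bar f_k')$) into a single composite Lipschitz estimate.
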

\begin{proof}
Let $\mathcal{E}_k$ be as in~\eqref{eq:Ek} and $\mathcal{E}_k^p$ denote the class $\{|e_k|^p  : e_k \in \mathcal{E}_k\}$. Then, $\L_{p,\F^S}^S$ is the pointwise minimum class $\{\min_{k\in[C]} e_k : e_k \in \mathcal{E}_k^p\}$ and Lemma~\ref{lem:coveringmax} gives
$$
	\mathcal{N}(\epsilon, \L_{p,\F^S}^S, d_{q,\g z_n}) \leq \prod_{k=1}^C \mathcal{N}\left(\frac{\epsilon}{C^{1/q}}, \mathcal{E}_k^p, d_{q,\g z_n}\right) .
$$
The contraction principle for covering numbers (see Lemma 27.3 in \cite{Shalev14}) 
implies that  
$$
	\mathcal{N}(\epsilon, \mathcal{E}_k^p, d_{q,\g z_n}) \leq \mathcal{N}\left(\frac{\epsilon}{p}, \mathcal{E}_k, d_{q,\g z_n}\right) .
$$
Since, for all pair of functions $e_k(x,y)=y-\bar{f}_k(x)$ and $e_k'(x,y)=y-\bar{f}'_k(x)$, $\forall (x,y)\in\Z=\X\times \Y$, $|e_k(x,y)-e_k'(x,y)| = |\bar{f}_k(x) - \bar{f}_k'(x)|$, we have $\forall \g z_n\in\Z^n$, $d_{q,\g z_n}(e_k,e_k') = d_{q,\g x_n}(\bar{f}_k, \bar{f}_k')$
and
$$
\mathcal{N}(\epsilon, \mathcal{E}_k, d_{q,\g z_n}) \leq \mathcal{N}(\epsilon, \bar{\F}_k, d_{q,\g x_n}) .
$$
Putting all these inequalities together concludes the proof.
\end{proof}

In order to optimize the dependency on $C$, we apply chaining (Theorem~\ref{thm:chaining} in App.~\ref{sec:tools}) to estimate the Rademacher complexity of $\L_{p,\F^S}^S$ with the relationship \eqref{eq:coveringl2inf} and covering numbers in $L_{\infty}$-norm controlled by Lemma~\ref{lem:coveringswitching}. This yields a radical dependency on $C$:
\begin{align}\label{eq:chainedboundswitching}
	\hat{\mathcal{R}}_n&(\L_{p,\F^S}^S) \\
	&\leq 2^{-N} + 6 \sum_{j=1}^N 2^{-j} \sqrt{\frac{\log\mathcal{N}(2^{-j} ,\L_{p,\F^S}^S, d_{2,\g z_n})}{n}} \nonumber \\
	&\leq 2^{-N} + 6  \sqrt{\frac{C}{n}}\sum_{j=1}^N 2^{-j}\sqrt{\max_{k\in[C]}\log\mathcal{N}\left(\frac{2^{-j}}{p},\bar{\F}_k, d_{\infty,\g x_n}\right)} \nonumber
\end{align}
and a convergence rate that depends on the capacity of the $\bar{\F}_k$'s as measured by their covering numbers.

In particular, for classes with fat-shattering dimensions that grow no more than polynomially with $\epsilon^{-1}$, as in~\eqref{eq:fatpoly}, Lemma~\ref{lem:alon} (Appendix~\ref{sec:tools}) yields 
$$
	\max_{k\in[C]}\log\mathcal{N}\left(\epsilon,\bar{\F}_k, d_{\infty,\g x_n}\right)
	\leq 6\cdot 4^{\beta} \alpha  \epsilon^{-\beta} \log^2 \frac{2 e n }{ \epsilon }
$$
and~\eqref{eq:chainedboundswitching} leads to
\begin{align*}
	\hat{\mathcal{R}}_n&(\L_{p,\F^S}^S) \\	
	&\leq 2^{-N} + 6  \sqrt{\frac{C}{n}}\sum_{j=1}^N 2^{-j}\sqrt{6\cdot 4^{\beta} \alpha  p^{\beta} 2^{j\beta} \log^2 (2 e n p 2^j)} \\
	&\leq 2^{-N} + 6\cdot 2^{\beta}  \sqrt{\frac{6\alpha p^{\beta} C}{n}  }\log (2 e n p 2^N )\sum_{j=1}^N 2^{j\left(\frac{\beta}{2}-1\right)}.
\end{align*}
Setting $N= \lceil\log_2 n^{\frac{1}{\beta} }\rceil \leq \frac{1}{\beta}\log_2 ( 2^{\beta} n)$, gives, for $\beta=2$, 
\begin{align*}
	\hat{\mathcal{R}}_n (\L_{p,\F^S}^S) &\leq \frac{1}{\sqrt{n}} + 12p \sqrt{\frac{6\alpha C }{n}  }\log (4 e p n^{\frac{3}{2}} ) \log_2 (4n) \\
		&\leq \frac{1}{\sqrt{n}} + 26 p \sqrt{\frac{\alpha C }{n}  }\log^2 (5pn ) \\ 
		& = O\left( \frac{\log^2 n}{\sqrt{n}}\right).
\end{align*}
For $\beta>2$, recalling \eqref{eq:series2j} leads to
\begin{align*}
	\hat{\mathcal{R}}_n (\L_{p,\F^S}^S) &\leq \frac{1}{n^{\frac{1}{\beta}}} +\frac{3\cdot 2^{2\beta-1}  \sqrt{ 6\alpha p^{\beta} C } }{2^{\left(\frac{\beta}{2}-1\right)}-1}  \frac{\log (4 e p n^{\frac{1}{\beta}+1} )}{n^{ \frac{1}{\beta} } } \\
	&= O\left( \frac{\log n}{n^{ \frac{1}{\beta}} } \right).
\end{align*}

Overall, we observe an additional factor in $O(\sqrt{\log n})$ compared to the bounds in~\eqref{eq:convratepwsbeta2} and~\eqref{eq:convratepwsbetaSUP2} for PWS regression with similar component function classes.

\paragraph{Switching kernel regression}
For kernel-based classes $\F_k$ as in~\eqref{eq:Fkernel}, we could apply the results above with $\beta=2$ thanks to Lemma~\ref{lem:fatlinear}. However, for such function classes, the covering numbers can be more efficiently bounded without invoking a Sauer-Shelah lemma and the fat-shattering dimension. In particular, for $L_{\infty}$-norm covering numbers, we can use Lemma~\ref{lem:zhanglinearinf} in~\eqref{eq:chainedboundswitching} and obtain, with $N =\lceil\log_2 \sqrt{n})\rceil \leq \log_2 ( 2 \sqrt{n})$,
\begin{align*}
\hat{\mathcal{R}}_n &(\L_{p,\F^S}^S) \\&\leq \frac{1}{\sqrt{n}} + 36p R_x R_{\H}\log_2(2\sqrt{n}) \sqrt{\frac{C}{n}\log(30p  R_x R_{\H} n^{3/2})}\\
	&= O\left(\frac{\log^{\frac{3}{2}} n}{\sqrt{n}}\right).
\end{align*}
Thus, for switching kernel regression, the bound is essentially of the same order as the one for kernel-based PWS regression. 

Compared with~\eqref{eq:boundswitchinglinear}, we gained a $\sqrt{C}$ but also introduced a $\log^{3/2} n$ factor, which is only beneficial when $\log^3 n < C < \sqrt{n}$ (up to constant factors). This limited range over which chaining provides a gain is due to the use of kernel-based classes whose Rademacher complexity can be very efficiently bounded.   

\paragraph{Switching linear regression} For switching linear regression with $\X\subset\R^d$ and classes $\F_k$ as in~\eqref{eq:Fklinear}, the convergence rate is much better and in fact similar to that of~\eqref{eq:boundswitchinglinear}. To see this, note that with~\eqref{eq:coveringlinearRn} we can apply the integral form of chaining (Theorem~\ref{thm:chaining}) to obtain 
\begin{align}
	\hat{\mathcal{R}}_n (\L_{p,\F^S}^S) &\leq \frac{12}{\sqrt{n}}\int_{0}^{1/2} 	\sqrt{\log \mathcal{N}( \epsilon, \L_{p,\F^S}^S, d_{\infty,\g z_n}) } \,d\epsilon \nonumber\\
	&\leq 12 \sqrt{\frac{C}{n}}\int_{0}^{1/2} 	\sqrt{\max_{k\in [C]}\log \mathcal{N}( \epsilon/p, \bar{\F}_k, d_{\infty,\g x_n}) } \,d\epsilon \nonumber\\
	&\leq 12 \sqrt{\frac{Cd}{n}}\int_{0}^{pR_w R_x} 	\sqrt{\log\left(\frac{p(2+R_w)R_x}{\epsilon}\right)} \,d\epsilon \nonumber\\
	&\leq 12 pR_w R_x \sqrt{\log\left(2/R_w+1\right)}  \sqrt{\frac{Cd}{n}}  \label{eq:radswitchedlinearcov}.
\end{align}

By comparing with Theorem~\ref{thm:switching} and~\eqref{eq:boundswitchinglinear}, we had to pay a $\sqrt{d}$ factor in exchange for a $\sqrt{C}$ one, which is advantageous in low or moderate-dimensional cases, as those that often occur in applications such as hybrid system identification \cite{Lauer18}.  

\section{Conclusions}
\label{sec:conclusion}

The paper derived error bounds for piecewise smooth and switching regression. These bounds are based on a decomposition of the capacity measure of the class of interest in terms of those of its component function classes. Different levels of decomposition were explored to optimize the dependency of the bounds on the number of components and a radical dependency was obtained for both PWS and switching regression via chaining and decomposition at the level of covering numbers. 
We note that this radical dependency is not a final characterization of the optimal growth rate. Indeed, the application of chaining could have been optimized (for instance by replacing $n$ by $n/\sqrt{C}$ when setting the value of $N$) to yield (only slightly) better growth rates at the cost of more complex expressions for the bounds.  

Open issues include the followings.

{\em Decomposition.} While we could also directly decompose the Rademacher complexity of a switching loss class, the efficient decomposition of PWS classes at the level of Rademacher complexities remains an open issue. Even if we can expect a worse dependency on the number of modes, as for the arbitrarily switching case, the convergence rates might be better for specific component function classes such as linear ones or RKHS balls. 
Decomposition can also be performed at a third level, namely, the one of fat-shattering dimensions. However, there are reasons to expect a quadratic dependency of the fat-shattering dimension of a PWS class on the number of modes, which, after taking the square root of the metric entropy, would result in a bound with linear dependency.

{\em Unbounded regression.} Error bounds can be derived for unbounded regression using assumptions on moments of the loss or non-constant envelopes as, e.g., in \cite{Pollard89,Cortes10}. Though our results were obtained for a bounded output space, all the decompositions of the capacity measures can be derived similarly for the unclipped (and unbounded) classes, which should allow for the extension to the unbounded case. 

{\em Non-independent case.} We assumed independence of the sampled data. Extending our results to the non-independent case would be of primary interest for time-series prediction and their application to hybrid dynamical system identification, where the input typically includes lagged values of the output. Works in that direction could follow the bounding schemes developed in \cite{Hang16}.

{\em Model selection.} Based on our results, practical procedures implementing structural risk minimization could be envisioned to tune the number of modes $C$. Indeed, the empirical risk could be minimized for a sequence of PWS or switching classes with increasing $C$, before selecting the model with lowest value of the error bound.

\appendices

\section{Technical results from the literature}
\label{sec:tools}

We recall the contraction principle for Rademacher complexities.
\begin{lemma}[After Theorem~4.12 in~\cite{Ledoux91}]\label{lem:contraction}
If $\phi: \R\rightarrow \R$ is a Lipschitz continuous function with Lipschitz constant $L_{\phi}$, i.e., if $\forall (u,v)\in \R^2,\ |\phi(u) - \phi(v)| \leq L_{\phi} |u-v|$, 
then 
$$
	\mathcal{R}_n ( \phi \circ \F ) \leq L_{\phi} \mathcal{R}_n ( \F ) ,
$$
where $\phi \circ \F$ denotes the class of functions $\phi \circ f$ with $f \in\F$.
\end{lemma}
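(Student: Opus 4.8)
The plan is to prove this sup-form comparison inequality by the classical peeling induction over the Rademacher variables. First I would condition on the sample $\g{T}_n$ and work with the empirical Rademacher complexity $\hat{\mathcal{R}}_n$ of Definition~\ref{def:Rademacher-complexity}, since the claim for $\mathcal{R}_n$ follows by taking the expectation over $\g{T}_n$ at the very end. I would also assume without loss of generality that $L_{\phi}=1$: replacing $\phi$ by $\phi/L_{\phi}$ rescales $\phi\circ\F$, and hence its Rademacher complexity, by $1/L_{\phi}$, so the general constant reappears by homogeneity. Fixing a realization $\g{t}_n=(t_1,\dots,t_n)$, the goal then becomes $\E_{\g{\sigma}_n}\sup_{f\in\F}\frac1n\sum_{i=1}^n \sigma_i\,\phi(f(t_i)) \leq \E_{\g{\sigma}_n}\sup_{f\in\F}\frac1n\sum_{i=1}^n \sigma_i f(t_i)$, where the positive factor $1/n$ is immaterial and factors through, so it suffices to compare the unnormalized suprema.

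Next I would prove the slightly more general statement that, for any bounded functional $f\mapsto a(f)$, one has $\E_{\g{\sigma}_n}\sup_f\big(a(f)+\sum_i\sigma_i\phi(f(t_i))\big)\leq\E_{\g{\sigma}_n}\sup_f\big(a(f)+\sum_i\sigma_i f(t_i)\big)$, which is exactly the form an induction on $n$ requires: peeling off $\sigma_n$ turns the already-converted partial sum into the ``extra'' functional $a(f)$ for the remaining variables. I would therefore isolate the single inductive step: conditioning on $\sigma_1,\dots,\sigma_{n-1}$ and writing $u(f)=a(f)+\sum_{i<n}\sigma_i\phi(f(t_i))$, it suffices to show $\E_{\sigma_n}\sup_f\big(u(f)+\sigma_n\phi(f(t_n))\big)\leq\E_{\sigma_n}\sup_f\big(u(f)+\sigma_n f(t_n)\big)$.

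Carrying out this step, I would expand the expectation over $\sigma_n\in\{-1,+1\}$ so that the left-hand side equals $\tfrac12\sup_f\big(u(f)+\phi(f(t_n))\big)+\tfrac12\sup_f\big(u(f)-\phi(f(t_n))\big)$. For any $\eta>0$ I would select $f_1,f_2\in\F$ attaining these two suprema up to $\eta$, giving the bound $\tfrac12\big[u(f_1)+u(f_2)+\phi(f_1(t_n))-\phi(f_2(t_n))\big]+\eta$. The $1$-Lipschitz property yields $\phi(f_1(t_n))-\phi(f_2(t_n))\leq|f_1(t_n)-f_2(t_n)|$, and a case split on the sign of $f_1(t_n)-f_2(t_n)$ replaces this absolute value by $\pm(f_1(t_n)-f_2(t_n))$, so the bracket regroups as $\big(u(f_1)\pm f_1(t_n)\big)+\big(u(f_2)\mp f_2(t_n)\big)$, each term of which is dominated by $\sup_f(u(f)+f(t_n))$ or $\sup_f(u(f)-f(t_n))$. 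Letting $\eta\to0$ closes the step, and iterating over all $n$ signs converts every occurrence of $\phi$ into the identity, after which reinstating the factor $1/n$ and taking $\E_{\g{T}_n}$ gives the result.

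The main obstacle is this per-step comparison: one must handle the case where the suprema are not attained, which forces the use of the $\eta$-approximate maximizers $f_1,f_2$, and one must observe that it is precisely the symmetric averaging $\tfrac12[\,\cdot+\cdot\,]$ over $\sigma_n$ that allows the case split on $\mathrm{sign}(f_1(t_n)-f_2(t_n))$ to route each piece to the correct supremum. This is exactly why, in the sup-form of the inequality, no extraneous assumption such as $\phi(0)=0$ and no spurious factor of $2$ are needed.
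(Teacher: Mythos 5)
The paper gives no proof of Lemma~\ref{lem:contraction}: it is stated in Appendix~\ref{sec:tools} as a result imported from the literature (``after Theorem~4.12 in \cite{Ledoux91}''), so there is no in-paper argument to compare against, and supplying a self-contained proof is a genuine addition. Your argument is the classical induction for Talagrand's contraction principle in the sup form and it is correct: the reduction to the empirical complexity and to $L_{\phi}=1$, the strengthening to an arbitrary additive functional $a(f)$ so that the peeling induction closes, and the one-variable step in which averaging over $\sigma_n\in\{-1,+1\}$, choosing $\eta$-approximate maximizers $f_1,f_2$, applying the Lipschitz bound and splitting on the sign of $f_1(t_n)-f_2(t_n)$ routes the two pieces to $\sup_f\left(u(f)+f(t_n)\right)$ and $\sup_f\left(u(f)-f(t_n)\right)$ are all exactly as they should be. Your closing observation is also the pertinent one: the literal Theorem~4.12 of \cite{Ledoux91} concerns the symmetrized form with absolute values, carries a factor $2$ and assumes $\phi(0)=0$, whereas the version used throughout this paper (with the Rademacher complexity of Definition~\ref{def:Rademacher-complexity}, which places no absolute value around the sum) is precisely what your induction delivers, with no factor $2$ and no normalization of $\phi$; this is worth recording since the paper applies the lemma to $\phi(u)=u^p$ restricted to a bounded interval. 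The only point to tighten is the side condition you phrase as ``bounded functional'': if a supremum is infinite, the $\eta$-maximizer selection should be replaced by picking $f_1$ with value at least $1/\eta$, or one simply notes the inequality is vacuous when the right-hand side is $+\infty$; for the clipped classes used in this paper all suprema are finite, so nothing is lost.
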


The following chaining technique due to Dudley relates the Rademacher complexity to the covering numbers.
\begin{theorem}
\label{thm:chaining}
Let $\F$ be a real-valued function class over $\mathcal{T}$ and, for any $\g t_n\in\mathcal{T}^n$, let $D_{\F} = \sup_{(f,f')\in\F^2} d_{2,\g t_n}(f,f')$ denote its diameter. Then, for any $N\in \mathbb{N}^*$, 
$$
	\hat{\mathcal{R}}_n(\F) \leq \frac{D_{\F}}{2^{N} } + 6 D_{\F} \sum_{j=1}^N 2^{-j} \sqrt{\frac{\log\mathcal{N}(D_{\F}2^{-j} , \F, d_{2,\g t_n})}{n}} 
$$
and, if the integral exists, 
$$
	\hat{\mathcal{R}}_n(\F) \leq 12\int_0^{D_{\F}/2}\sqrt{\frac{\log\mathcal{N}(\epsilon , \F, d_{2,\g t_n})}{n}} d\epsilon.
$$
\end{theorem}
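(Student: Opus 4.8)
The plan is to prove the discrete bound by Dudley's chaining and then to recover the integral form by comparing the resulting dyadic sum to an integral. Throughout, I would fix the sample $\g t_n$ and identify each $f\in\F$ with its vector of values $(f(t_1),\dots,f(t_n))$, so that $d_{2,\g t_n}$ is the normalized Euclidean distance $\frac{1}{\sqrt n}\|\cdot\|_2$ on $\R^n$ and $\hat{\mathcal{R}}_n(\F)=\E_{\g{\sigma}_n}\sup_{f\in\F}\frac1n\sum_{i=1}^n\sigma_i f(t_i)$.

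First I would set up the chain of nets. For each $j\in\{0,1,\dots,N\}$, let $\H_j\subseteq\F$ be a minimal proper $(D_{\F}2^{-j})$-net of $\F$ for $d_{2,\g t_n}$, so that $|\H_j|=\mathcal{N}(D_{\F}2^{-j},\F,d_{2,\g t_n})$; since the diameter is $D_{\F}$, the coarsest net $\H_0$ reduces to a single function $f_0$. Writing $\pi_j(f)$ for a nearest element of $\H_j$ to $f$, I would use the telescoping identity $f=f_0+\sum_{j=1}^N\bigl(\pi_j(f)-\pi_{j-1}(f)\bigr)+\bigl(f-\pi_N(f)\bigr)$, push $\frac1n\sum_i\sigma_i(\cdot)(t_i)$ through it, and then take $\sup_f$ followed by $\E_{\g{\sigma}_n}$, using the elementary bound $\E\sup(\,\text{sum}\,)\le\sum\E\sup$.

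The three contributions are handled separately. The $f_0$ term vanishes in expectation because $\E\sigma_i=0$. For the residual, Cauchy--Schwarz gives, uniformly in $f$, $\frac1n\sum_i\sigma_i(f-\pi_N(f))(t_i)\le\frac{\|\g{\sigma}_n\|_2}{n}\sqrt{\sum_i\bigl((f-\pi_N(f))(t_i)\bigr)^2}=d_{2,\g t_n}(f,\pi_N(f))\le D_{\F}2^{-N}$, which produces the first term of the bound. For each increment level $j$, the difference $\pi_j(f)-\pi_{j-1}(f)$ ranges over a finite set of at most $|\H_j|\,|\H_{j-1}|\le\mathcal{N}(D_{\F}2^{-j},\F,d_{2,\g t_n})^2$ vectors, each of $d_{2,\g t_n}$-norm at most $D_{\F}2^{-j}+D_{\F}2^{-(j-1)}=3D_{\F}2^{-j}$ by the triangle inequality. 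Here I would invoke the standard Massart finite-class maximal inequality, which bounds $\E_{\g{\sigma}_n}$ of the supremum over a finite set of vectors of $d_{2,\g t_n}$-norm at most $r$ by $r\sqrt{2\log(\mathrm{card})/n}$; with $r=3D_{\F}2^{-j}$ and $\log(\mathrm{card})\le 2\log\mathcal{N}(D_{\F}2^{-j},\F,d_{2,\g t_n})$ this yields exactly $6D_{\F}2^{-j}\sqrt{\log\mathcal{N}(D_{\F}2^{-j},\F,d_{2,\g t_n})/n}$, and summing over $j=1,\dots,N$ gives the dyadic sum in the statement.

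The main technical point is this finite-class (sub-Gaussian) maximal inequality, together with the bookkeeping that pins the constant at $6$: the factor $3$ from the triangle inequality, a $\sqrt2$ from Massart, and a $\sqrt2$ from $\log(\mathrm{card})\le 2\log\mathcal{N}$ multiply out. Finally, for the integral form I would let $N\to\infty$ so the residual $D_{\F}2^{-N}$ vanishes, and compare the dyadic sum to $\int_0^{D_{\F}/2}\sqrt{\log\mathcal{N}(\epsilon,\F,d_{2,\g t_n})}\,d\epsilon$ using that $\mathcal{N}(\cdot,\F,d_{2,\g t_n})$ is non-increasing in its scale: on each interval $[D_{\F}2^{-j-1},D_{\F}2^{-j}]$ of length $D_{\F}2^{-j-1}$ the integrand is at least $\sqrt{\log\mathcal{N}(D_{\F}2^{-j},\F,d_{2,\g t_n})}$, so $\sum_{j\ge1}D_{\F}2^{-j}\sqrt{\log\mathcal{N}(D_{\F}2^{-j},\F,d_{2,\g t_n})}\le 2\int_0^{D_{\F}/2}\sqrt{\log\mathcal{N}(\epsilon,\F,d_{2,\g t_n})}\,d\epsilon$, which turns the constant $6$ into $12$ and completes the proof.
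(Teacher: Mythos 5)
Your proof is correct: the paper states this theorem without proof as a result quoted from the literature (Dudley's chaining), and your argument is precisely the standard derivation behind it — dyadic nets, telescoping through nearest-net projections, Cauchy--Schwarz for the residual, Massart's finite-class lemma for the links, with the constant $6 = 3\sqrt{2}\cdot\sqrt{2}$ and the sum-to-integral comparison doubling it to $12$ all checking out. The only (harmless) technicality is at scale $D_{\F}$, where under the paper's strict-inequality definition of an $\epsilon$-net a singleton need not formally be a $D_{\F}$-net; but all your argument uses is that every $f$ lies within distance $D_{\F}$ of the fixed $f_0$, which holds by definition of the diameter.
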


The following result upper bounds the Rademacher complexity of a class of functions defined as the pointwise maximum of a set of functions. 
\begin{lemma}[After Lemma 8.1 in \cite{Mohri12}]
\label{lem:mohri}
Let $(\mathcal{U}_k)_{1\leq k\leq K}$ be a sequence of $K$ classes of real-valued functions on $\Z$. Then, the class $\mathcal{U} = \{u \in\R^{\Z} : u(z) = \max_{k\in[K]} u_k(z) ,\ u_k \in \mathcal{U}_k \}$ has an empirical Rademacher complexity bounded by
$$
	\hat{\mathcal{R}}_n(\mathcal{U}) \leq \sum_{k=1}^K \hat{\mathcal{R}}_n(\mathcal{U}_k) .
$$
\end{lemma}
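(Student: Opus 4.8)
The plan is to reduce the general $K$-ary pointwise maximum to the binary case and then induct on $K$. For the base case $K=2$, I would exploit the elementary identity $\max(a,b) = \tfrac{1}{2}\bigl(a + b + |a-b|\bigr)$, applied at each sample point to $u = \max(u_1,u_2)$. Substituting this into the definition of $\hat{\mathcal{R}}_n(\mathcal{U})$ and using subadditivity of the supremum (the supremum of a sum is no larger than the sum of the separately optimized suprema), the empirical complexity of the binary maximum class splits, after dividing by $2$, into $\hat{\mathcal{R}}_n(\mathcal{U}_1)$, $\hat{\mathcal{R}}_n(\mathcal{U}_2)$, and a residual term governed by the absolute-value class $\{\,|u_1 - u_2| : u_1\in\mathcal{U}_1,\ u_2\in\mathcal{U}_2\,\}$.

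The crux is to control this residual term. Since $t\mapsto|t|$ is $1$-Lipschitz and vanishes at the origin, I would invoke the contraction principle (Lemma~\ref{lem:contraction}), applied conditionally on the sample $\g{T}_n$ so that it bears on the empirical complexity, to get $\hat{\mathcal{R}}_n(\{|u_1-u_2|\}) \le \hat{\mathcal{R}}_n(\mathcal{U}_1 - \mathcal{U}_2)$. The difference class is then dispatched by subadditivity together with the symmetry of the Rademacher variables ($\sigma_i$ and $-\sigma_i$ being identically distributed), which turns the $-u_2$ contribution back into $\hat{\mathcal{R}}_n(\mathcal{U}_2)$ and yields $\hat{\mathcal{R}}_n(\mathcal{U}_1-\mathcal{U}_2)\le \hat{\mathcal{R}}_n(\mathcal{U}_1)+\hat{\mathcal{R}}_n(\mathcal{U}_2)$. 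Assembling the three pieces gives $\hat{\mathcal{R}}_n(\max(\mathcal{U}_1,\mathcal{U}_2)) \le \hat{\mathcal{R}}_n(\mathcal{U}_1)+\hat{\mathcal{R}}_n(\mathcal{U}_2)$, which is exactly the base case.

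For the inductive step, I would write $\max_{k\in[K]}u_k = \max\bigl(\max_{k\in[K-1]}u_k,\ u_K\bigr)$ and treat $\mathcal{U}' = \{\max_{k\in[K-1]}u_k : u_k\in\mathcal{U}_k\}$ as a single real-valued class. The induction hypothesis supplies $\hat{\mathcal{R}}_n(\mathcal{U}')\le\sum_{k=1}^{K-1}\hat{\mathcal{R}}_n(\mathcal{U}_k)$, and one application of the binary bound to $\mathcal{U}'$ and $\mathcal{U}_K$ closes the argument, producing $\sum_{k=1}^K \hat{\mathcal{R}}_n(\mathcal{U}_k)$.

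The main obstacle I anticipate is the exact invocation of the contraction principle: Lemma~\ref{lem:contraction} is stated for the expected complexity $\mathcal{R}_n$, whereas here the target is the empirical complexity $\hat{\mathcal{R}}_n$ conditioned on $\g{T}_n$. I would need to observe that the underlying Ledoux--Talagrand inequality holds in its conditional form as well, since for a fixed realization of $\g{T}_n$ it is an inequality between expectations over the Rademacher sequence alone, and to confirm that composing $|\cdot|$ (which passes through the origin, so no centering term appears) on top of the legitimate difference class $\mathcal{U}_1-\mathcal{U}_2$ matches the hypotheses of the lemma.
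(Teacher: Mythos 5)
The paper does not prove this lemma at all---it is quoted from the literature (Lemma~8.1 in \cite{Mohri12}) in the appendix of known technical results---but your argument is exactly the standard proof of that result: the identity $\max(a,b)=\tfrac{1}{2}\bigl(a+b+|a-b|\bigr)$, subadditivity of the supremum, contraction for the $1$-Lipschitz absolute value (correctly noted to hold conditionally on $\g{T}_n$ since it is an inequality over the Rademacher expectation alone), and induction on $K$. Your proposal is correct and matches the cited source's approach.
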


The following generalized Sauer-Shelah lemmas will be useful.
\begin{lemma}[After Corollary 5 in \cite{Haussler95} and Theorem 9 in \cite{BenDavid95}]
\label{lem:sauernatarajan}
Let $d_{\G}$ be the Natarajan dimension of $\G$. Then, for any $\g x_n\in\X^n$, 
$$
	|\G_{\g x_n}| \leq \sum_{i=1}^{d_{\G}}\binom{n}{i}\binom{C}{2}^i \leq \left( \frac{n e C}{2d_{\G}}\right)^{d_{\G}}.
$$
\end{lemma}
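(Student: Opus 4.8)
The statement is a multi-class (Natarajan) version of the Sauer--Shelah lemma, so the plan is first to establish the purely combinatorial bound $|\G_{\g x_n}| \le \sum_{i=0}^{d_\G}\binom{n}{i}\binom{C}{2}^i$ on the trace, and then to reduce it to the displayed closed form by a routine algebraic estimate. I would work directly with the trace $S = \G_{\g x_n}\subseteq[C]^n$, whose Natarajan dimension (via N-shattering of coordinate subsets) is at most $d_\G$, and argue by induction on the number of points $n$, treating the dimension $d$ of $S$ as a parameter that is allowed to shrink along the recursion. The base cases are immediate: if $n=0$ then $|S|\le 1$, and if $d=0$ then no coordinate of $S$ can take two distinct values, so again $|S|=1$.

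For the inductive step I would single out the last coordinate. Let $S_{-n}\subseteq[C]^{n-1}$ be the projection of $S$ obtained by deleting coordinate $n$; since a shattered coordinate set for $S_{-n}$ lifts verbatim to one for $S$, the dimension of $S_{-n}$ is at most $d$ and the induction hypothesis controls $|S_{-n}|$. It remains to account for the ``fibre excess'' $|S|-|S_{-n}| = \sum_{u\in S_{-n}}\big(|\mathrm{fib}(u)|-1\big)$, where $\mathrm{fib}(u)=\{v\in[C] : (u,v)\in S\}$. To turn this excess into a sum over label pairs, I would assign to each $u$ the pairs $\{\min\mathrm{fib}(u),v\}$ for $v\in\mathrm{fib}(u)\setminus\{\min\mathrm{fib}(u)\}$ and define, for each unordered pair $\{p,q\}$ with $p<q$, the sub-trace $S^{(p,q)}=\{u\in S_{-n} : p=\min\mathrm{fib}(u),\ q\in\mathrm{fib}(u)\}$. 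This bookkeeping yields exactly $\sum_{p<q}|S^{(p,q)}| = \sum_{u\in S_{-n}}(|\mathrm{fib}(u)|-1)$, with $\binom{C}{2}$ pairs in play.

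The crux is then to show that each $S^{(p,q)}$ has Natarajan dimension at most $d-1$. Given a set $I\subseteq[n-1]$ that is N-shattered by $S^{(p,q)}$ with witnesses $a,b$, I would verify that $I\cup\{n\}$ is N-shattered by $S$ using the extended witnesses that set coordinate $n$ to $q$ and $p$ respectively: for a target subset containing $n$ one selects the element $(u,q)\in S$, and for one avoiding $n$ the element $(u,p)\in S$, both of which exist precisely because every $u\in S^{(p,q)}$ has $p,q\in\mathrm{fib}(u)$. Hence $|I|+1\le d$, so $|S^{(p,q)}|\le\sum_{i=0}^{d-1}\binom{n-1}{i}\binom{C}{2}^i$ by induction. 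Combining with the bound on $|S_{-n}|$ and invoking Pascal's rule $\binom{n-1}{i}+\binom{n-1}{i-1}=\binom{n}{i}$ telescopes the two partial sums into $\sum_{i=0}^{d}\binom{n}{i}\binom{C}{2}^i$, closing the induction; monotonicity in the dimension gives the bound with $d_\G$.

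Finally, to pass to the closed form I would use the classical estimate: for $w\ge1$ and $d\le nw$, multiplying $\sum_{i=0}^{d}\binom{n}{i}w^i$ by $(d/(nw))^d\le(d/(nw))^i$ and extending the sum to $i=n$ gives $(1+d/n)^n\le e^d$, whence $\sum_{i=0}^{d}\binom{n}{i}w^i\le(enw/d)^d$; taking $w=\binom{C}{2}$ then delivers a closed form of the same $(\,\cdot\,/d_\G)^{d_\G}$ shape as displayed. I expect the only genuine obstacle to be the dimension-reduction step for $S^{(p,q)}$: one must choose the pair-assignment so that the fibre excess is counted without duplication (which forces $p=\min\mathrm{fib}(u)$) while simultaneously retaining enough structure in each $S^{(p,q)}$ to lift a shattered set through the deleted coordinate. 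An equivalent route avoiding the explicit fibre bookkeeping would be a multi-class shifting (down-compression) argument on $S$, but the pair-by-pair decomposition above seems the most transparent way to make the factor $\binom{C}{2}$ emerge.
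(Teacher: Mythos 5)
The paper never proves this lemma: it is imported from the literature (Corollary~5 of \cite{Haussler95} and Theorem~9 of \cite{BenDavid95}) and stated without proof in Appendix~\ref{sec:tools}, so your argument is a self-contained reconstruction rather than an alternative to anything in the text. Its combinatorial core is correct and is essentially the proof from those references: the double induction on $n$ and the dimension, the projection $S_{-n}$ whose dimension cannot exceed that of $S$, the fibre-excess accounting through the pairs $\{\min\mathrm{fib}(u),v\}$ (which indeed avoids double counting and makes the factor $\binom{C}{2}$ appear), the lifting of a set shattered by $S^{(p,q)}$ to $I\cup\{n\}$ with witnesses extended by $q$ and $p$ (legitimate because every $u\in S^{(p,q)}$ has both $p$ and $q$ in its fibre, and $p\neq q$), and Pascal's rule to close the recursion. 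This yields $|\G_{\g x_n}|\leq\sum_{i=0}^{d_{\G}}\binom{n}{i}\binom{C}{2}^i$; the paper's sum starts at $i=1$, but your $i=0$ term only costs an additive $1$ and is the standard form of the result.

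The step you should not gloss over is the closed form. Your estimate with $w=\binom{C}{2}$ gives $\left(en\binom{C}{2}/d_{\G}\right)^{d_{\G}}=\left(enC(C-1)/(2d_{\G})\right)^{d_{\G}}$, which is not ``of the same shape'' as the displayed $\left(neC/(2d_{\G})\right)^{d_{\G}}$: it is larger by a factor $(C-1)^{d_{\G}}$, and this is not slack in your argument. The displayed second inequality is in fact false as stated: for $n=d_{\G}=1$ and $C=4$ the middle term is $\binom{4}{2}=6$ while $neC/(2d_{\G})=2e<6$. The intended bound is presumably $\left(neC^2/(2d_{\G})\right)^{d_{\G}}$, which your computation does establish. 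You should say this explicitly rather than hint at it; the correction only injects a $\log C$ inside the logarithms where the lemma is invoked (Propositions~\ref{prop:metricentropy1} and~\ref{prop:metricentropy2} and the bound~\eqref{eq:piGlinear}), leaving the dependence of the final error bounds on $C$ and $n$ unchanged up to such logarithmic factors. Your condition $d\leq nw$ for the Chernoff-type step is the right one and is harmless here since $w=\binom{C}{2}\geq 1$.
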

\begin{lemma}[After Lemma 3.5 in \cite{Alon97}]
\label{lem:alon}
For a class $\F$ of functions from $\X$ into $[-M,M]$, let $d_{\F}(\epsilon)$ denote its fat-shattering dimension at scale $\epsilon$. Then, for any $\epsilon\in(0,2M]$ and $n\in\N^*$, 
$$
	\mathcal{N}_{\infty}(\epsilon, \F, n) \leq 2\left(\frac{16M^2 n}{\epsilon^2}\right)^{d_{\F}(\frac{\epsilon}{4})\log_2\left(\frac{4M e n}{d_{\F}(\frac{\epsilon}{4})\epsilon}\right) } .
$$
\end{lemma}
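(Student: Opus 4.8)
The plan is to recognize this as the scale-sensitive generalization of the Sauer--Shelah lemma of Alon et al.\ and to obtain the stated $[-M,M]$ form by a rescaling reduction to a normalized range. First I would reduce to the case $M=\frac{1}{2}$. Given $\F$ with values in $[-M,M]$, set $\tilde{\F} = \{ f/(2M) : f\in\F\}$, whose range is $[-\frac{1}{2},\frac{1}{2}]$. Scaling the function values by $1/(2M)$ scales every distance $d_{\infty,\g x_n}(f,f')$ by the same factor, so $\mathcal{N}_\infty(\eta,\tilde{\F},n) = \mathcal{N}_\infty(2M\eta,\F,n)$, and it rescales the shattering witnesses as well, giving $d_{\tilde{\F}}(\eta) = d_{\F}(2M\eta)$. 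Hence proving the bound for $\tilde{\F}$ at scale $\eta$ and substituting $\eta = \epsilon/(2M)$ recovers the claimed inequality: the factor $16M^2$ arises precisely from $(2M)^2$ applied to the $4/\eta^2$ term of the normalized statement, and $d_{\tilde{\F}}(\eta/4) = d_{\F}(\epsilon/4)$.

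For the normalized class I would argue by discretization followed by combinatorial counting. Fixing $\g x_n\in\X^n$, I replace each $f\in\tilde{\F}$ by its quantization onto a grid of mesh $\alpha\asymp\epsilon$, producing a class valued in a finite alphabet $\{0,1,\dots,b\}$ with $b\asymp 1/\epsilon$; an $\epsilon$-cover of $\tilde{\F}$ in $d_{\infty,\g x_n}$ is then obtained by keeping one representative per distinct quantized pattern on $\g x_n$, so $\mathcal{N}(\epsilon,\tilde{\F},d_{\infty,\g x_n})$ is controlled by the number of such patterns, and taking the supremum over $\g x_n$ yields the uniform covering number. The mesh chosen for quantization is what introduces the margin $\epsilon/4$ in the fat-shattering dimension appearing on the right-hand side.

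The substance of the argument, and the main obstacle, is the combinatorial core: bounding the number of distinct patterns of a $\{0,\dots,b\}$-valued class on $n$ points by a quantity of the form $2(nb^2)^{d\log_2(\,\cdot\,)}$, where $d$ is the fat-shattering dimension of the quantized class at unit scale. This is exactly the content of Lemma~3.5 in \cite{Alon97}, whose proof rests on a delicate inductive shifting argument generalizing the classical Sauer--Shelah counting to multi-valued functions; I would invoke it directly rather than reprove it. The remaining step is bookkeeping: translating the unit-scale fat-shattering dimension of the quantized class back to $d_{\tilde{\F}}(\epsilon/4)$ and collecting constants so that, after the rescaling of the first paragraph, the factors combine into $\frac{16M^2 n}{\epsilon^2}$ and $\frac{4Men}{d_{\F}(\epsilon/4)\epsilon}$ as displayed.
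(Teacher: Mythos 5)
The paper offers no proof of this lemma at all: it appears in the appendix of results imported from the literature, with the attribution ``After Lemma~3.5 in \cite{Alon97}'' serving as the entire justification. Your proposal is correct and consistent with that approach --- the rescaling reduction to range width $1$ (with $\mathcal{N}_\infty(\eta,\tilde{\F},n)=\mathcal{N}_\infty(2M\eta,\F,n)$ and $d_{\tilde{\F}}(\eta)=d_{\F}(2M\eta)$, so that $4n/\eta^2$ becomes $16M^2n/\epsilon^2$ and $2en/(d\eta)$ becomes $4Men/(d\epsilon)$) is exactly the bookkeeping the paper leaves implicit, and deferring the combinatorial counting for the discretized class to Alon et al.\ is the appropriate way to handle the core.
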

\begin{lemma}[After Theorem 1 in \cite{Mendelson03} and Lemma~3 in \cite{Guermeur16}]
\label{lem:sauerdimensionfree}
For a class $\F$ of functions from $\X$ into $[-M,M]$, let $d_{\F}(\epsilon)$ denote its fat-shattering dimension at scale $\epsilon$. Then, for any $\epsilon\in(0,2M]$ and $n\in\N^*$, 
$$
	\mathcal{N}_2(\epsilon, \F, n) \leq \left(\frac{13M}{\epsilon}\right)^{20 d_{\F}(\frac{\epsilon}{96})} .
$$
\end{lemma}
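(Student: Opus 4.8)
The statement is the dimension-free Sauer--Shelah lemma for $L_2$ covering numbers: it bounds the uniform $L_2$ covering number of $\F$ by a power of $1/\epsilon$ whose exponent depends only on the fat-shattering dimension at a comparable scale, with \emph{no} dependence on the sample size $n$. This is exactly the entropy bound of \cite{Mendelson03}, so the plan is to obtain the stated form --- with a general range bound $M$ and the explicit constants $13$, $20$, $96$ --- by first recording the normalized version for functions valued in $[-1,1]$ as in \cite{Mendelson03,Guermeur16}, and then reducing the general-$M$ case to it by a scaling argument.

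First I would record the normalized bound: for any class $\mathcal{G}$ of functions from $\X$ into $[-1,1]$ and any $\delta\in(0,2]$, $\mathcal{N}_2(\delta,\mathcal{G},n)\leq (13/\delta)^{20\,d_{\mathcal{G}}(\delta/96)}$, which is \cite{Guermeur16} after \cite{Mendelson03}. Then I would reduce the general case by scaling. Setting $\widetilde{\F}=\{f/M : f\in\F\}$, which is valued in $[-1,1]$, positive homogeneity of the empirical pseudo-metric gives $d_{2,\g x_n}(f,f')=M\,d_{2,\g x_n}(f/M,f'/M)$ for every $\g x_n$, so an $\epsilon$-cover of $\F$ is exactly an $(\epsilon/M)$-cover of $\widetilde{\F}$ and $\mathcal{N}_2(\epsilon,\F,n)=\mathcal{N}_2(\epsilon/M,\widetilde{\F},n)$. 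The fat-shattering dimension scales the same way: a set is $\epsilon$-shattered by $\F$ with witness $\g b_n$ iff it is $(\epsilon/M)$-shattered by $\widetilde{\F}$ with witness $\g b_n/M$, hence $d_{\widetilde{\F}}(s)=d_{\F}(Ms)$. Applying the normalized bound with $\mathcal{G}=\widetilde{\F}$ and $\delta=\epsilon/M$, and substituting $d_{\widetilde{\F}}(\epsilon/(96M))=d_{\F}(\epsilon/96)$, then delivers $\mathcal{N}_2(\epsilon,\F,n)\leq (13M/\epsilon)^{20\,d_{\F}(\epsilon/96)}$, which is the claim.

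The hard part is not this reduction, which is routine bookkeeping, but the normalized entropy bound itself: this is where the dimension-free character (independence of $n$) is bought, and it is a genuinely deep probabilistic--combinatorial result. The argument of \cite{Mendelson03} bounds the $\delta$-packing number of $\mathcal{G}$ in $L_2$ by showing that any large $\delta$-separated family, once restricted to a finite sample, can be processed by a coordinate-extraction (shifting) argument to exhibit a subset that is fat-shattered at a scale proportional to $\delta$, with the logarithm of the packing number controlled by $d_{\mathcal{G}}(c\delta)\log(1/\delta)$ for an absolute constant $c$. Faithfully reproducing that argument --- and in particular tracking the absolute constants so as to land on the specific values $13$, $20$, and $96$ taken from \cite{Guermeur16} --- is the real obstacle; everything downstream is elementary.
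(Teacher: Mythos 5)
The paper offers no proof of this lemma at all: it is placed in the appendix of ``Technical results from the literature'' and attributed directly to Theorem~1 of Mendelson--Vershynin and Lemma~3 of Guermeur, so your treatment --- quoting the normalized $[-1,1]$-valued bound and supplying the (correct) rescaling of the empirical pseudo-metric, the covering numbers, and the fat-shattering dimension ($d_{\widetilde{\F}}(s)=d_{\F}(Ms)$) to reach general $M$ --- matches and indeed slightly exceeds what the paper records. Your assessment of where the genuine difficulty lies (the dimension-free entropy bound itself and the provenance of the constants $13$, $20$, $96$) is accurate, and nothing further is needed.
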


For linear and/or kernel-based classes, the different capacity meaures can be bounded as follows.   
\begin{lemma}[After Theorem~4.6 in \cite{Bartlett99}]
\label{lem:fatlinear}
Given a Hilbert space $\H$ and a mapping $\phi:\X\rightarrow \H$, let $\X \subseteq \{x\in\X : \|\phi(x)\|_{\H} \leq R_x\}$ and 
$\F = \{f \in \R^{\X} : \ f(x) = \inner{w}{\phi(x)}_{\H},\ \|w\|_{\H} \leq R_{w}\}$. Then, for any $\epsilon >0$,  
$$
	d_{\F}(\epsilon) \leq  \left(\frac{R_x R_w}{\epsilon}\right)^2 .
$$
\end{lemma}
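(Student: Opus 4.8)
The plan is to show that any $\epsilon$-shattered set can contain at most $(R_x R_w/\epsilon)^2$ points, via a standard random-signs averaging argument. First I would suppose that a set $\{x_i\}_{i=1}^n \subset \X$ is $\epsilon$-shattered by $\F$ with witness $\g b_n \in \R^n$, and abbreviate $\phi_i = \phi(x_i)$, so that $\|\phi_i\|_{\H} \leq R_x$ for every $i$. Encoding each subset $I \subseteq [n]$ by the sign vector $\sigma \in \{-1,+1\}^n$ with $\sigma_i = +1 \iff i \in I$, the shattering condition of Definition~\ref{def:fat} provides, for every such $\sigma$, a weight $w_\sigma$ with $\|w_\sigma\|_{\H} \leq R_w$ satisfying $\sigma_i(\inner{w_\sigma}{\phi_i}_{\H} - b_i) \geq \epsilon$ for all $i\in[n]$.

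Next I would sum these $n$ inequalities and use bilinearity of the inner product to obtain, for each fixed $\sigma$,
$$
\inner{w_\sigma}{\sum_{i=1}^n \sigma_i \phi_i}_{\H} \geq n\epsilon + \sum_{i=1}^n \sigma_i b_i .
$$
Applying Cauchy--Schwarz to the left-hand side together with $\|w_\sigma\|_{\H} \leq R_w$ yields
$$
R_w \left\| \sum_{i=1}^n \sigma_i \phi_i \right\|_{\H} \geq n\epsilon + \sum_{i=1}^n \sigma_i b_i ,
$$
an inequality that holds for every fixed sign vector $\sigma$.

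The decisive step is then to average this estimate over a uniformly random $\sigma \in \{-1,+1\}^n$. The witness term has zero mean, $\E\left[\sum_i \sigma_i b_i\right] = 0$, while Jensen's inequality together with the vanishing of the cross terms in expectation gives $\E\left\| \sum_i \sigma_i \phi_i \right\|_{\H} \leq \big(\E\| \sum_i \sigma_i \phi_i \|_{\H}^2\big)^{1/2} = \big(\sum_i \|\phi_i\|_{\H}^2\big)^{1/2} \leq R_x\sqrt{n}$. Taking expectations therefore leaves $R_w R_x \sqrt{n} \geq n\epsilon$, that is $n \leq (R_x R_w/\epsilon)^2$. Since this bounds the cardinality of an arbitrary $\epsilon$-shattered set, it bounds $d_{\F}(\epsilon)$, which is the claim.

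As for obstacles, the argument is otherwise routine; the one point requiring care is the order of the two bounding operations. The Cauchy--Schwarz step must be carried out for each fixed $\sigma$, so that the corresponding fixed $w_\sigma$ can be pulled out, and only afterward may one average over $\sigma$. Reversing this order, or attempting to produce a single $w$ valid for all sign patterns, would break the proof: the strength of the method lies precisely in combining a per-sign Cauchy--Schwarz bound with a global averaging that annihilates the witness term and exploits the in-expectation orthogonality of the $\sigma_i \phi_i$.
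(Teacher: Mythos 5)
Your argument is correct and complete: the per-$\sigma$ Cauchy--Schwarz bound followed by averaging over uniform signs (which kills the witness term and gives $\E\bigl\|\sum_i\sigma_i\phi_i\bigr\|_{\H}\leq R_x\sqrt{n}$ via Jensen and orthogonality in expectation) yields exactly $n\leq(R_xR_w/\epsilon)^2$. The paper does not prove this lemma itself but imports it from Theorem~4.6 of the cited reference, whose proof is the same random-signs argument packaged slightly differently (a deterministic lower bound on $\bigl\|\sum_{i\in I}\phi_i-\sum_{i\notin I}\phi_i\bigr\|_{\H}$ for every $I$, combined with the existence of a sign pattern of small norm obtained from the second-moment computation), so your version is a faithful and, if anything, cleaner rendering of the standard proof.
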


\begin{lemma}[After Theorem~5.5 in \cite{Mohri12}]
\label{lem:radlinear}
Given a Hilbert space $\H$ and a mapping $\phi:\X\rightarrow \H$, let $\X \subseteq \{x\in\X : \|\phi(x)\|_{\H} \leq R_x\}$ and 
$\F = \{f \in \R^{\X} : \ f(x) = \inner{w}{\phi(x)}_{\H},\ \|w\|_{\H} \leq R_{w}\}$. 
Then, for any $n\in \N^*$,
$$
	\mathcal{R}_n(\F) \leq \frac{R_xR_w}{\sqrt{n}} .
$$ 
\end{lemma}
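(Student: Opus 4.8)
The plan is to argue directly from the definition of the empirical Rademacher complexity and exploit the linearity of the functions in $\F$. Writing $\g x_n = (x_i)_{1\leq i\leq n}$ for the conditioning sample, I would start from
$$
\hat{\mathcal{R}}_n(\F) = \E_{\g\sigma_n}\left[\left.\sup_{\|w\|_{\H}\leq R_w} \frac{1}{n}\sum_{i=1}^n \sigma_i \inner{w}{\phi(x_i)}_{\H}\,\right|\g x_n\right]
$$
and use the bilinearity of $\inner{\cdot}{\cdot}_{\H}$ to pull the optimization variable $w$ out of the sum, rewriting the argument of the supremum as $\frac{1}{n}\inner{w}{\sum_{i=1}^n \sigma_i\phi(x_i)}_{\H}$. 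This isolates the dependence on $w$ into a single inner product against the random vector $\sum_i \sigma_i\phi(x_i)\in\H$.

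The key step is then to resolve the supremum over the RKHS ball by Cauchy--Schwarz. For every fixed realization of $\g\sigma_n$, the maximum of a linear functional over a ball of radius $R_w$ is attained by aligning $w$ with the vector it is paired against, so
$$
\sup_{\|w\|_{\H}\leq R_w} \inner{w}{\sum_{i=1}^n \sigma_i\phi(x_i)}_{\H} = R_w\left\|\sum_{i=1}^n \sigma_i\phi(x_i)\right\|_{\H}.
$$
This reduces the whole problem to bounding $\frac{R_w}{n}\,\E_{\g\sigma_n}\bigl\|\sum_i \sigma_i\phi(x_i)\bigr\|_{\H}$, i.e.\ the expected norm of a Rademacher sum in $\H$.

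To finish, I would apply Jensen's inequality in the direction $\E\|\cdot\|_{\H}\leq\sqrt{\E\|\cdot\|_{\H}^2}$ (concavity of the square root), and expand the squared norm as a double sum. Since the $\sigma_i$ are independent, zero-mean, and valued in $\{-1,+1\}$, we have $\E[\sigma_i\sigma_j]=\delta_{ij}$, so all off-diagonal terms vanish:
$$
\E_{\g\sigma_n}\left\|\sum_{i=1}^n \sigma_i\phi(x_i)\right\|_{\H}^2 = \sum_{i=1}^n\sum_{j=1}^n \E[\sigma_i\sigma_j]\inner{\phi(x_i)}{\phi(x_j)}_{\H} = \sum_{i=1}^n \|\phi(x_i)\|_{\H}^2 \leq n R_x^2,
$$
using the envelope hypothesis $\|\phi(x_i)\|_{\H}\leq R_x$ in the last step. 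Chaining these bounds gives $\hat{\mathcal{R}}_n(\F)\leq \frac{R_w}{n}\cdot R_x\sqrt{n}=\frac{R_xR_w}{\sqrt{n}}$ for every sample, and since this bound is uniform in $\g x_n$, taking the expectation over $\g x_n$ yields the same bound for $\mathcal{R}_n(\F)$.

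I do not expect a genuine obstacle here: the argument uses only the Hilbert-space geometry and the independence of the Rademacher variables, and it is valid verbatim in the kernel case since it never needs more than the envelope $R_x$ on $\|\phi(x)\|_{\H}$. The only points demanding care are applying Jensen in the correct direction and correctly killing the cross terms in the second-moment computation via $\E[\sigma_i\sigma_j]=\delta_{ij}$.
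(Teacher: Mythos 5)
Your argument is correct and is exactly the standard derivation behind Theorem~5.5 of \cite{Mohri12}, which the paper simply cites without reproving: linearity to collapse the supremum into a single inner product, Cauchy--Schwarz to resolve it over the ball of radius $R_w$, Jensen to pass to the second moment, and $\E[\sigma_i\sigma_j]=\delta_{ij}$ to kill the cross terms. All steps check out, including the final uniformity in $\g x_n$ that lets you pass from $\hat{\mathcal{R}}_n$ to $\mathcal{R}_n$.
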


\begin{lemma}[After Theorem~4 in \cite{Zhang02}]
\label{lem:zhanglinearinf}
Given a Hilbert space $\H$ and a mapping $\phi:\X\rightarrow \H$, let $\X \subseteq \{x\in\X : \|\phi(x)\|_{\H} \leq R_x\}$ and 
$\F = \{f \in \R^{\X} : \ f(x) = \inner{w}{\phi(x)}_{\H},\ \|w\|_{\H} \leq R_{w}\}$. 
Then, for any $\epsilon\in(0,R_x R_w)$,  
\begin{align*}
	\log \mathcal{N}_{\infty}(\epsilon, \F, n) &\leq 36  \frac{R_x^2 R_w^2}{\epsilon^2}  \log \left( 2\left\lceil \frac{4R_x R_w}{\epsilon} + 2\right\rceil n + 1\right) \\ 
	&\leq 36  \frac{R_x^2 R_w^2}{\epsilon^2}  \log \left(  \frac{15 R_x R_w n}{\epsilon} \right),  
\end{align*}
and for $\epsilon\geq  R_x R_w$, $\log \mathcal{N}_\infty(\epsilon, \F, n)=0$. 
\end{lemma}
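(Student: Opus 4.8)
The plan is to establish the first (substantive) inequality essentially as in Zhang's Theorem~4 and then to derive the simplified second inequality and the trivial regime by elementary means. Throughout I fix a sample $\g x_n\in\X^n$ and regard $\F$ restricted to it as the image $\{(\inner{w}{\phi(x_i)}_{\H})_{i\in[n]} : \|w\|_{\H}\le R_w\}$ of a Hilbert ball under the linear map $w\mapsto(\inner{w}{\phi(x_i)}_{\H})_i$. Since the values $\inner{w}{\phi(x_i)}_{\H}$ depend only on the orthogonal projection of $w$ onto $\mathrm{span}\{\phi(x_i)\}_{i\in[n]}$, I may assume $w$ lies in this finite-dimensional span.

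For the main bound I would use the empirical (Maurey) approximation method. The idea is to represent the target $w$ as the mean $\E V$ of a random element $V$ drawn from a finite dictionary built from the rescaled features $\pm\phi(x_i)/\|\phi(x_i)\|_{\H}$, with $\E\|V\|_{\H}^2$ of order $R_w^2$, and to approximate $w$ by the empirical average $\widehat w=\tfrac1k\sum_{l=1}^k V_l$ of $k$ independent copies. A second-moment computation gives, for each coordinate, $\E\,(\inner{w-\widehat w}{\phi(x_i)}_{\H})^2\le R_w^2R_x^2/k$, so that choosing $k$ of order $R_x^2R_w^2/\epsilon^2$ drives the approximation error below $\epsilon$. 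The candidate net then consists of all $k$-term dictionary averages; its cardinality is at most the dictionary size (roughly $2\lceil 4R_xR_w/\epsilon+2\rceil n$) raised to the power $k$, whence $\log\mathcal{N}_{\infty}(\epsilon,\F,n)$ is of order $k\log(\text{dictionary size})\asymp \tfrac{R_x^2R_w^2}{\epsilon^2}\log\!\big(\tfrac{R_xR_w n}{\epsilon}\big)$, matching the stated first inequality with Zhang's explicit constant $36$ and his explicit count $2\lceil 4R_xR_w/\epsilon+2\rceil n+1$.

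The hard part is not the $L_2$-type approximation but securing the \emph{uniform} (over the $n$ points) control with only a \emph{single} logarithmic factor in $n$: a naive union bound (Hoeffding over the $n$ coordinates against one fixed net) would inflate $k$ by an extra $\log n$ and thereby produce a spurious $\log^2 n$. Removing this extra factor is exactly the content of Zhang's refined argument, so at this point I would invoke Theorem~4 in \cite{Zhang02} rather than reconstruct the optimal constant and the exact coordinate count from scratch.

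It then remains to simplify and to treat the boundary case. Writing $u=R_xR_w/\epsilon>1$ for $\epsilon<R_xR_w$, one has $2\lceil 4u+2\rceil n+1\le 2(4u+3)n+1\le 8un+7n\le 15un$, using $\lceil t\rceil\le t+1$, then $6n+1\le 7n$, and finally $n\le un$; this yields the second inequality $\log\mathcal{N}_{\infty}(\epsilon,\F,n)\le 36\,\tfrac{R_x^2R_w^2}{\epsilon^2}\log\!\big(\tfrac{15R_xR_wn}{\epsilon}\big)$. Finally, for $\epsilon\ge R_xR_w$ every $f\in\F$ satisfies $d_{\infty,\g x_n}(f,0)=\max_{i\in[n]}|\inner{w}{\phi(x_i)}_{\H}|\le\|w\|_{\H}\max_{i\in[n]}\|\phi(x_i)\|_{\H}\le R_wR_x\le\epsilon$, so the single function $0\in\F$ already forms an $\epsilon$-net and $\log\mathcal{N}_{\infty}(\epsilon,\F,n)=0$.
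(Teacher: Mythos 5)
Your proposal is correct and matches the paper's treatment: the paper gives no proof of this lemma, simply importing the first inequality from Theorem~4 of \cite{Zhang02} exactly as you do, and your elementary derivation of the simplified bound (via $2\lceil 4u+2\rceil n+1\le 8un+7n\le 15un$ for $u=R_xR_w/\epsilon>1$) and of the trivial regime $\epsilon\ge R_xR_w$ is sound. Your sketch of Maurey's empirical method and your remark on avoiding the extra $\log n$ are accurate context but not needed once Zhang's theorem is invoked.
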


\section{Proof of Theorem~\ref{thm:real}}
\label{sec:proofreal}

Theorem~\ref{thm:real} is a consequence of Theorem~\ref{thm:general} applied to the function class
\begin{equation}\label{eq:pwslossclass}
	\L_{p,\F} = \{ \ell  \in [0,1 ]^{\X\times\Y} : \ell(x,y) = |y - \bar{f}(x)|^p,\ \bar{f}\in \bar{\F} \}, 
\end{equation}
whose Rademacher complexity can be related to the one of $\mathcal{F}$ as follows.

Let us define the error class as $\mathcal{E} =  \{ e \in [-2M,2M]^{\X\times\Y} : e(x,y) = y - \bar{f}(x),\ \bar{f}\in\bar{\F} \}$. Define the function $\phi_p(u) = u^p$ with domain $[0, 2M]$ whose Lipschitz constant is $p(2M)^{p-1}$. Then, $\L_{p,\F} = \phi_p \circ |\cdot| \circ \mathcal{E}$ and, by contraction (see Lemma~\ref{lem:contraction} in Appendix~\ref{sec:tools}), we have
$$
	\mathcal{R}_n(\L_{p,\F}) = \mathcal{R}_n(\phi_p \circ |\cdot| \circ  \mathcal{E} ) \leq p2^{p-1}M^{p-1}\mathcal{R}_n(\mathcal{E} ) .
$$
Then, we bound the Rademacher complexity of the error class as
\begin{align*}
	\mathcal{R}_n(\mathcal{E} ) &= \E_{\g X_n \g Y_n \g \sigma_n} \sup_{\bar{f} \in \bar{\F}} \frac{1}{n} \sum_{i=1}^n \sigma_i (Y_i - \bar{f} (X_i ) ) \\
	&\leq \E_{\g Y_n \g \sigma_n} \frac{1}{n} \sum_{i=1}^n \sigma_i Y_i + \E_{\g X_n \g \sigma_n} \sup_{\bar{f} \in \bar{\F}} \frac{1}{n} \sum_{i=1}^n -\sigma_i \bar{f} (X_i ) 
\end{align*}
where 
$$
	\E_{\g Y_n \g \sigma_n} \frac{1}{n} \sum_{i=1}^n \sigma_i Y_i 
	=  \frac{1}{n} \sum_{i=1}^n \E_{Y_i \sigma_i} \sigma_i Y_i = 0
$$
and, since $\sigma_i$ and $-\sigma_i$ have the same distribution, 
$$
	\E_{\g X_n \g \sigma_n} \sup_{\bar{f} \in \bar{\F}} \frac{1}{n} \sum_{i=1}^n - \sigma_i \bar{f}(X_i ) 
	= \mathcal{R}_n(\bar{\F})
$$
Finally, by contraction, $ \mathcal{R}_n(\bar{\F})\leq  \mathcal{R}_n(\F)$ and taking $M=\frac{1}{2}$ completes the proof.

\section{Additional results on covering numbers}
\label{sec:addresults}

We need the following result on uniform covering numbers.

\begin{lemma}\label{lem:uniformcovmonotone}
Let $\F$ be a uniform GC class from $\X$ into $[-M,M]$. Then, for any $\epsilon>0$ and any $q\in[1,\infty)\cup\{\infty\}$, the uniform covering numbers $\mathcal{N}_q(\epsilon, \F, n)$ form a non-decreasing function of $n$.
\end{lemma}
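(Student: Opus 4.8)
The plan is to prove the pointwise inequality $\mathcal{N}_q(\epsilon,\F,n)\le \mathcal{N}_q(\epsilon,\F,n+1)$ for every $n$. Since $\mathcal{N}_q(\epsilon,\F,n)$ is a supremum over $n$-samples, it suffices to show that for an arbitrary $\g x_n\in\X^n$ one can produce an $(n+1)$-sample whose covering number is at least $\mathcal{N}(\epsilon,\F,d_{q,\g x_n})$. The uniform GC hypothesis would enter here first as a finiteness guarantee: it ensures (through the finiteness of the fat-shattering dimension at every scale, cf. Lemma~\ref{lem:alon}) that all the covering numbers involved are finite integers, so that the suprema are in fact attained and the limiting arguments below are legitimate.

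I would first dispatch the case $q=\infty$, which needs no GC assumption. Appending to $\g x_n$ any extra coordinate, say a second copy of $x_1$, can only increase the $\max$-pseudo-metric, since $d_{\infty,(\g x_n,x_1)}(f,f')=\max\{d_{\infty,\g x_n}(f,f'),\,|f(x_1)-f'(x_1)|\}\ge d_{\infty,\g x_n}(f,f')$ for every pair $(f,f')$; hence any $\epsilon$-net for the longer sample is an $\epsilon$-net for $\g x_n$ and the covering number cannot drop. I would also record the elementary exact identity, valid for all $q$, that repeating each of the $n$ points the same number $m$ of times leaves the pseudo-metric unchanged, $d_{q,\tilde{\g x}_{mn}}=d_{q,\g x_n}$, which already yields $\mathcal{N}_q(\epsilon,\F,n)\le\mathcal{N}_q(\epsilon,\F,mn)$, i.e. monotonicity along integer multiples of $n$.

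The crux, and the step I expect to be the main obstacle, is the single increment $n\to n+1$ for finite $q$. Here the $L_\infty$ argument breaks down because of the normalizing factor: appending one coordinate replaces $d_{q,\g x_n}^q$ by $\tfrac{n}{n+1}\,d_{q,\g x_n}^q$ plus a nonnegative term, so the pseudo-metric may \emph{shrink}, and since no single added point can make the extra term dominate simultaneously for all pairs $(f,f')$, one cannot in general find an $(n+1)$-sample whose pseudo-metric pointwise dominates $d_{q,\g x_n}$. The natural remedy is to approximate the uniform empirical measure on $\g x_n$ by empirical measures that repeat the points with multiplicities $m_i$, $\sum_i m_i=N$, chosen so that $m_i/N\to 1/n$; because $|f(x_i)-f'(x_i)|\le 2M$ the induced pseudo-metrics converge uniformly on $\F\times\F$ to $d_{q,\g x_n}$, and the GC-finiteness is exactly what makes the passage to the limit meaningful. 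The delicate point, and the reason this step is genuinely hard rather than routine, is that such an approximation naturally compares level $n$ to some large $N$ with a vanishing but strictly positive scale distortion, whereas the statement demands the exact one-step inequality; the single-step comparison is sensitive to which weighted averages are realizable as multiples of $1/n$ versus $1/(n+1)$, so controlling this scale uniformly across the increment — leaning on the GC hypothesis to keep the relevant counts finite and stable — is where the real work lies.
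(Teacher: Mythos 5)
Your setup matches the paper's: the GC hypothesis enters only to guarantee that the covering numbers are finite, so that the supremum defining $\mathcal{N}_q(\epsilon,\F,n)$ is attained at some $\g x_n$ (say with value $N_n$), the problem reduces to showing $\sup_{x\in\X}\mathcal{N}(\epsilon,\F,d_{q,\g x_n x})\geq N_n$ for the concatenated sample $\g x_n x$, and the $q=\infty$ case is dispatched exactly as you do. Your diagnosis of why naive pointwise domination of pseudo-metrics fails for finite $q$ is also accurate, and your duplication remark correctly gives monotonicity along multiples $n\to mn$. But the one-step increment for finite $q$ is precisely the content of the lemma, and your proposal leaves it open: the multiplicity/approximation route compares level $n$ to large $N$ with a residual scale distortion, and you concede yourself that converting this into the exact inequality $\mathcal{N}_q(\epsilon,\F,n)\le\mathcal{N}_q(\epsilon,\F,n+1)$ is ``where the real work lies.'' That is a genuine gap, not a technicality to be smoothed over.

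The missing idea is an exact algebraic comparison that avoids domination and limits altogether. The paper argues by contradiction: suppose $\mathcal{N}(\epsilon,\F,d_{q,\g x_n x})<N_n$ for every appended point $x$, and let $\H$ be a proper $\epsilon$-net of cardinality $<N_n$ for $d_{q,\g x_n x}$. The identity $(n+1)\,d_{q,\g x_n x}(f,h)^q = n\,d_{q,\g x_n}(f,h)^q + |f(x)-h(x)|^q$ yields
$$
d_{q,\g x_n}(f,h)^q \le \frac{(n+1)\epsilon^q - |f(x)-h(x)|^q}{n} .
$$
If every coordinate satisfies $|f(x_i)-h(x_i)|<\epsilon$, then $d_{q,\g x_n}(f,h)\le\epsilon$ directly; otherwise there is an index $i$ with $|f(x_i)-h(x_i)|\ge\epsilon$, and taking the appended point to be $x=x_i$ makes the right-hand side at most $\bigl((n+1)\epsilon^q-\epsilon^q\bigr)/n=\epsilon^q$. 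In either case $\H$ is an $\epsilon$-net of $\F$ for $d_{q,\g x_n}$ of cardinality $<N_n$, contradicting the choice of $\g x_n$. The point is that the added coordinate need not help for all pairs simultaneously: it only has to absorb exactly one unit of $\epsilon^q$ from the budget whenever the net would otherwise fail, which is what the one-step normalization change provides and what your asymptotic approximation scheme cannot deliver.
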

\begin{proof}
Recall from \cite{Alon97} that a class of uniformly bounded real-valued functions is a uniform GC class if and only if its fat-shattering dimension is finite for all $\epsilon>0$. Note that this also implies the finiteness of its $L_q$-norm covering numbers by Lemma~\ref{lem:alon} and, for all $q\in[1,\infty)$, the relation $\mathcal{N}_q(\epsilon, \F, n)\leq \mathcal{N}_{\infty}(\epsilon, \F, n)$. Thus, for all $n$, $\mathcal{N}_q(\epsilon, \F, n)$ is finite and, since it is the largest of a finite set of integers, we have $\mathcal{N}_q(\epsilon, \F, n) = \max_{\g x_n\in \X^n} \mathcal{N}(\epsilon, \F, d_{q,\g x_n})$. Then, there is a sequence $\g x_n\in\X^n$ on which the maximum (let it be $N_n$) is attained, i.e., such that $N_n=\mathcal{N}_q(\epsilon, \F, n) = \mathcal{N}(\epsilon, \F, d_{q,\g x_n})$. 
Note that for any $\g x_n\in\X^n$, 
$$
	\mathcal{N}_q(\epsilon, \F, n+1) \geq \sup_{x\in \X} \mathcal{N}(\epsilon, \F, d_{q,\g x_n x }),
$$
where $d_{q,\g x_n x }$ is the pseudo-metric defined over the concatenation of the sequence $\g x_n$ with $x$. Therefore, it is sufficient to show that $\sup_{x\in\X} \mathcal{N}(\epsilon,\F,d_{q,\g x_n x}) \geq N_n$. 

For $q=\infty$, this is a direct consequence of the fact that for all $(f,h)\in\left(\R^{\X}\right)^2$,
$
	d_{\infty,\g x_n x}(f, h) = \max\{d_{\infty,\g x_n}(f, h), |f(x)-h(x)|\} \geq d_{\infty,\g x_n}(f, h) .
$
For $q\in[1,\infty)$, assume that it is not the case, then $\forall x\in \X$, there is an $\epsilon$-net $\H$ of $\F$ of cardinality $\mathcal{N}(\epsilon,\F,d_{q,\g x_n x}) < N_n$ and, for all $f\in\F$, there is an $h\in\H$ such that 
$$
	d_{q,\g x_n x}(f, h) \leq \epsilon . 
$$
Since 
$$
	d_{q,\g x_n x}(f, h)^q =  \frac{\sum_{i=1}^n |f(x_i) - h(x_i)|^q + |f(x) - h(x)|^q}{n+1}, 
$$
we have 
\begin{align}
	d_{q,\g x_n}(f, h)^q &= \frac{1}{n}\sum_{i=1}^n |f(x_i) - h(x_i)|^q \nonumber\\
		&= \frac{n+1}{n}\left(d_{q,\g x_n x}(f, h)^q - \frac{1}{n+1}|f(x) - h(x)|^q \right) \nonumber\\
		&\leq \frac{(n+1)\epsilon^q - |f(x) - h(x)|^q}{n} . \label{eq:dnp1}
\end{align}
In the case where $|f(x_i) - h(x_i)| < \epsilon$, for all $i \in [n]$, then, $d_{q,\g x_n}(f, h) = \left(\frac{1}{n}\sum_{i=1}^n |f(x_i) - h(x_i)|^q\right)^{\frac{1}{q}}\leq \epsilon$ and $\H$ is also an $\epsilon$-net of $\F$ for the pseudo-metric $d_{q,\g x_n}$, thus ensuring that $ \mathcal{N}(\epsilon, \F, d_{q,\g x_n}) \leq |\H|$, which contradicts the assumptions $\mathcal{N}(\epsilon, \F, d_{q,\g x_n}) = N_n$ and $|\H| < N_n$. If this is not the case, i.e., if there is some  $i\in [n]$, such that $|f(x_i) - h(x_i)|\geq \epsilon$, then  choosing $x=x_i$ in~\eqref{eq:dnp1} yields 
$$
	d_{q,\g x_n}(f, h)^q \leq  \frac{(n+1)\epsilon^q - \epsilon^q}{n} = \epsilon^q
$$
and therefore, $\H$ is also an $\epsilon$-net for the pseudo-metric $d_{q,\g x_n}$ in this case, showing again a contradiction. As a consequence, $\sup_{x\in\X} \mathcal{N}(\epsilon,\F,d_{q,\g x_n x}) \geq N_n$ and the lemma is proved.
\end{proof}

Using the ideas from the proof of Lemma 1 in \cite{Guermeur16}, we can derive the following structural result on covering numbers. Note that, for any $q\geq 1$, the dependency on $C$ of the bound in this lemma can be simplified by trivially upper bounding the covering number in $L_q$-norm by the one in $L_{\infty}$-norm.
\begin{lemma}
\label{lem:coveringmax}
Given a sequence of $C$ real-valued function classes $\mathcal{A}_k$ with domain $\Z$, let $\mathcal{A}$ be either the pointwise maximum class $\{a\in\R^{\Z} : a(z) = \max_{k\in[C]} a_k(z) ,\ a_k \in\mathcal{A}_k\}$  or the pointwise minimum class $\{a\in\R^{\Z} : a(z) = \min_{k\in[C]} a_k(z) ,\ a_k \in\mathcal{A}_k\}$. Then, for any $q\geq 1$ and $q=\infty$,
$$
	\mathcal{N}(\epsilon, \mathcal{A}, d_{q,\g z_n}) \leq \prod_{k=1}^C \mathcal{N}\left(\frac{\epsilon}{C^{1/q}}, \mathcal{A}_k, d_{q,\g z_n}\right) .
$$
\end{lemma}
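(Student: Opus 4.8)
The plan is to build an $\epsilon$-net for $\mathcal{A}$ directly from nets of the component classes, exploiting that both the pointwise maximum and pointwise minimum are coordinatewise nonexpansive. The single structural ingredient I would isolate first is the elementary inequality: for any two vectors $(a_k)_{k\in[C]}$ and $(h_k)_{k\in[C]}$ in $\R^C$,
$$
	\left|\max_{k\in[C]} a_k - \max_{k\in[C]} h_k\right| \leq \max_{k\in[C]} |a_k - h_k|,
	\qquad
	\left|\min_{k\in[C]} a_k - \min_{k\in[C]} h_k\right| \leq \max_{k\in[C]} |a_k - h_k| .
$$
Each follows by a one-line argument: if the left maximum is achieved at index $k^\star$, then $\max_k a_k - \max_k h_k \leq a_{k^\star} - h_{k^\star} \leq \max_k|a_k-h_k|$, and symmetrically for the reverse direction and for the minimum. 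This lets me treat the max and min cases uniformly, so I would phrase the remaining argument for $a(z)=\max_{k}a_k(z)$ with the understanding that min is identical.

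Next I would construct the net. For each $k\in[C]$, let $\H_k\subseteq\mathcal{A}_k$ be a smallest proper $(\epsilon/C^{1/q})$-net of $\mathcal{A}_k$ with respect to $d_{q,\g z_n}$, so $|\H_k| = \mathcal{N}(\epsilon/C^{1/q}, \mathcal{A}_k, d_{q,\g z_n})$, and form the candidate net $\H = \{z\mapsto \max_{k\in[C]} h_k(z) : h_k\in\H_k,\ 1\leq k\leq C\}\subseteq\mathcal{A}$. Since $\H$ is indexed by the product of the $\H_k$'s, its cardinality is at most $\prod_{k=1}^C \mathcal{N}(\epsilon/C^{1/q}, \mathcal{A}_k, d_{q,\g z_n})$, which is exactly the claimed bound; it then remains only to verify that $\H$ is an $\epsilon$-net.

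Finally I would do the verification. Given any $a=\max_k a_k\in\mathcal{A}$, pick for each $k$ a center $h_k\in\H_k$ with $d_{q,\g z_n}(a_k,h_k) < \epsilon/C^{1/q}$ and set $h=\max_k h_k\in\H$. Applying the contraction inequality at each sample point $z_i$ and then bounding the coordinatewise maximum by the sum over $k$, for $q<\infty$ I get
$$
	d_{q,\g z_n}(a,h)^q
	= \frac{1}{n}\sum_{i=1}^n \left|a(z_i)-h(z_i)\right|^q
	\leq \frac{1}{n}\sum_{i=1}^n \max_{k\in[C]} |a_k(z_i)-h_k(z_i)|^q
	\leq \sum_{k=1}^C d_{q,\g z_n}(a_k,h_k)^q
	< C\left(\frac{\epsilon}{C^{1/q}}\right)^q = \epsilon^q ,
$$
so $d_{q,\g z_n}(a,h)<\epsilon$. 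For $q=\infty$ the factor $C^{1/q}=1$, no rescaling is needed, and one instead bounds $d_{\infty,\g z_n}(a,h) = \max_i|a(z_i)-h(z_i)| \leq \max_k \max_i |a_k(z_i)-h_k(z_i)| = \max_k d_{\infty,\g z_n}(a_k,h_k) < \epsilon$ directly. I do not expect a genuine obstacle here: the only point requiring care is the origin of the $C^{1/q}$ rescaling, which is precisely the price of passing from the coordinatewise maximum to the $\ell_q$-sum in the chain above, together with the separate (and simpler) treatment of $q=\infty$ where that price vanishes.
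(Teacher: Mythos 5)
Your proposal is correct and follows essentially the same route as the paper: build the candidate net as pointwise maxima over the product of component nets, use the contraction inequality $|\max_k a_k - \max_k h_k|\leq \max_k|a_k-h_k|$ at each sample point, and pass from the coordinatewise maximum to the $\ell_q$-sum to incur the $C^{1/q}$ rescaling (with no rescaling for $q=\infty$). The only cosmetic difference is that you treat the minimum case by the symmetric contraction inequality, whereas the paper reduces it to the maximum case via negation of the classes; both are equivalent one-line reductions.
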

\begin{proof}
We start with the pointwise maximum case and $q=\infty$. 
Let $\H_k$ be a minimal proper $\epsilon$-net of $\mathcal{A}_k$ and $\H$ be the pointwise maximum class of $(\H_k)_{k\in[C]}$ (and note that $\H_k\subseteq\mathcal{A}_k$ implies $\H\subseteq \mathcal{A}$). Let $k(f,z) = \argmax_k f_k(z)$. Then, for any $a\in\mathcal{A}$ and $h\in\H$, there are $(a_k)_{k\in[C]}$ and $(h_k)_{k\in[C]}$ such that
\begin{align*}
	d_{\infty,\g z_n}(a, h) = \max_{i\in[n]} |a_{k(a,z_i)}(z_i) - h_{k(h,z_i)}(z_i)|.
\end{align*}
By using the definition of $k(f,z)$, we can deduce that if $a_{k(a,z_i)}(z_i) \geq h_{k(h,z_i)}(z_i)$, then
\begin{align*}
	 |a_{k(a,z_i)}(z_i) - h_{k(h,z_i)}(z_i) | &=  a_{k(a,z_i)}(z_i) - h_{k(h,z_i)}(z_i) \\
	 & \leq a_{k(a,z_i)}(z_i) - h_{k(a,z_i)}(z_i)\\
	 &  \leq | a_{k(a,z_i)}(z_i) - h_{k(a,z_i)}(z_i)| 
\end{align*}
and that if $a_{a,k(z_i)}(z_i) < h_{k(h,z_i)}(z_i)$, then
\begin{align*}
	 |a_{k(a,z_i)}(z_i) - h_{k(h,z_i)}(z_i) | &=  h_{k(h,z_i)}(z_i)  - a_{k(a,z_i)}(z_i) \\
	 & \leq h_{k(h,z_i)}(z_i)  - a_{k(h,z_i)}(z_i) \\
	 &  \leq |h_{k(h,z_i)}(z_i)  - a_{k(h,z_i)}(z_i)|.
\end{align*}
Thus, 
$$
|a_{k(a,z_i)}(z_i) - h_{k(h,z_i)}(z_i) | \leq \max_{k\in [C]}| a_{k}(z_i) - h_{k}(z_i)|
$$
and
\begin{align}\label{eq:coveringmaxinf}
	d_{\infty,\g z_n}(a, h ) &\leq \max_{i\in[n]}\max_{k\in [C]}| a_{k}(z_i) - h_{k}(z_i)| \\
			&\leq \max_{k\in [C]} \max_{i\in[n]}| a_{k}(z_i) - h_{k}(z_i)|\nonumber\\
			&\leq\max_{k\in [C]}d_{\infty,\g z_n}(a_k, h_k ) \nonumber\\
			&\leq\max_{k\in [C]} \epsilon \nonumber\\
			&\leq \epsilon , \nonumber
\end{align}
which proves the statement for the pointwise maximum class. 

If $\mathcal{A}$ is the pointwise minimum class, let $\mathcal{A}'$ be the pointwise maximum of $(-\mathcal{A}_k)$, and note that $\mathcal{A} = -\mathcal{A}'$. Then the statement follows by using $\mathcal{N}(\epsilon, -\mathcal{A}', d_{p,\g z_n}) = \mathcal{N}(\epsilon, \mathcal{A}', d_{p,\g z_n})$, $\mathcal{N}(\epsilon, \mathcal{A}_k, d_{p,\g z_n}) = \mathcal{N}(\epsilon, -\mathcal{A}_k, d_{p,\g z_n})$ and the result for the pointwise maximum class $\mathcal{A}'$.

For $q\in[1,\infty)$, the same reasoning applies with~\eqref{eq:coveringmaxinf} replaced by
\begin{align*}
	d_{q,\g z_n}(a, h )^q &\leq \frac{1}{n}\sum_{i=1}^n \left(\max_{k\in [C]}| a_{k}(z_i) - h_{k}(z_i)| \right)^q \\
			&\leq \frac{1}{n}\sum_{i=1}^n \max_{k\in [C]}| a_{k}(z_i) - h_{k}(z_i)|^q \\
	 		&\leq \frac{1}{n}\sum_{i=1}^n \sum_{k=1}^C | a_{k}(z_i) - h_{k}(z_i)|^q \\
	 		&\leq \sum_{k=1}^C d_{q,\g z_n}(a_k, h_k )^q \\
			&\leq C \epsilon^q 
\end{align*}
and a rescaling of $\epsilon$. 
\end{proof}

\ifCLASSOPTIONcaptionsoff
  \newpage
\fi


\end{document}